\newtheorem{theorem}{Theorem}[section]
\newtheorem{lemma}[theorem]{Lemma}
\newtheorem{proposition}[theorem]{Proposition}
\newtheorem{definition}{Definition}[section]
\newtheorem{remark}{Remark}
\newcommand{\Scal}{\mathcal{S}}
\newcommand{\Acal}{\mathcal{A}}
\newcommand{\Fcal}{\mathcal{F}}
\newcommand{\say}[1]{``#1''} 
\DeclareMathOperator*{\argmax}{\arg\!\max}
\DeclareMathOperator*{\argmin}{\arg\!\min}
\DeclareMathOperator{\poly}{poly}
\newcommand{\gapmin}{\rho}
\newcommand{\mdp}{\mathcal{M}}
\newcommand{\idxn}[1][]{\ifthenelse{\equal{#1}{}}{\mathsf{INDQ}_n}{\mathsf{INDQ}_{#1}}}
\newcommand{\simplex}{\triangle}
\newcommand{\expect}{\mathbb{E}}
\newcommand{\states}{\mathcal{S}}
\newcommand{\trans}{P}
\newcommand{\actions}{\mathcal{A}}
\title{Agnostic $Q$-learning with Function Approximation in Deterministic Systems: Tight Bounds on Approximation Error and Sample Complexity}
\date{}
\author{
Simon S. Du\thanks{Institute for Advanced Study. Email: \texttt{ssdu@ias.edu}}
\and Jason D. Lee\thanks{Princeton University. Email: \texttt{jasonlee@princeton.edu}}
\and Gaurav Mahajan\thanks{University of California, San Diego. Email: \texttt{gmahajan@eng.ucsd.edu}}
\and Ruosong Wang\thanks{Carnegie Mellon University. Email:\texttt{ruosongw@andrew.cmu.edu}}
}
\begin{document}
\maketitle



\begin{abstract}


The current paper studies the problem of agnostic $Q$-learning with function approximation in deterministic systems where the optimal $Q$-function is approximable by a function in the class $\mathcal{F}$ with approximation error $\delta \ge 0$. We propose a novel recursion-based algorithm and show that if  $\delta = O\left(\rho/\sqrt{\dim_E}\right)$, then one can find the optimal policy using $O\left(\dim_E\right)$ trajectories, where $\rho$ is the gap between the optimal $Q$-value of the best actions and that of the second-best actions and $\dim_E$ is the Eluder dimension of $\mathcal{F}$. Our result has two implications:
\begin{enumerate}
\item In conjunction with the lower bound in [Du et al., ICLR 2020], our upper bound suggests that the condition $\delta = \widetilde{\Theta}\left(\rho/\sqrt{\dim_E}\right)$ is necessary and sufficient for algorithms with polynomial sample complexity.
\item In conjunction with the lower bound in [Wen and Van Roy, NIPS 2013], our upper bound suggests that the sample complexity $\widetilde{\Theta}\left(\dim_E\right)$ is tight even in the agnostic setting.
\end{enumerate}
Therefore, we settle the open problem on agnostic $Q$-learning proposed in [Wen and Van Roy, NIPS 2013].
We further extend our algorithm to the stochastic reward setting and obtain similar results.

\end{abstract}

\section{Introduction}
$Q$-learning is a fundamental approach in reinforcement learning~\citep{watkins1992q}.
Empirically, combining $Q$-learning with function approximation schemes has lead to tremendous success on various sequential decision-making problems.
However, theoretically, we only have a good understanding of $Q$-learning in the tabular setting.
\citet{strehl2006pac, jin2018q} show that with certain exploration techniques, $Q$-learning provably finds a near-optimal policy with sample complexity polynomial in the number of states, number of actions and the planning horizon.
However, modern reinforcement learning applications often require dealing with huge state space where the polynomial dependency on the number of states is not acceptable.

Recently, there has been great interest in designing and analyzing $Q$-learning algorithms with linear function approximation~\citep{wen2013efficient,du2019provably}.
Under various additional assumptions, these works show that one can obtain a near-optimal policy using $Q$-learning with sample complexity polynomial in the feature dimension $d$ and the planning horizon, if the optimal $Q$-function is an exact linear function of the $d$-dimensional features of the state-action pairs.

A major drawback of these works is that the algorithms can only be applied in the well-specified case, i.e., the optimal $Q$-function is an exact linear function.
In practice, the optimal $Q$-function is usually linear up to small approximation errors instead of being exactly linear.
In this paper, we focus on the agnostic setting, i.e., the optimal $Q$-function can only be approximated by a function class with approximation error $\delta$, which is closer to practical scenarios.
Indeed, designing a provably efficient $Q$-learning algorithm in the agnostic setting is an open problem posed by \citet{wen2013efficient}.

Technically, the agnostic setting is arguably more challenging than the exact setting.
As recently shown by~\citet{du2019good}, for the class of linear functions, when the approximation error $\delta = \Omega(\sqrt{\poly(H) / d})$ where $H$ is the planning horizon, any algorithm needs to sample exponential number of trajectories to find a near-optimal policy even in deterministic systems.
Therefore, for algorithms with polynomial sample complexity, additional assumptions are needed to bypass the hardness result.
For the exact setting $\delta = 0$, \citet{wen2013efficient}~show that one can find an optimal policy using polynomial number of trajectories for linear functions in deterministic systems, which implies that the agnostic setting could be exponentially harder than the exact setting.

Due to the technical challenges, for the agnostic setting, previous papers mostly focus on the bandit setting or reinforcement learning with a  generative model~\citep{lattimore2019learning, van2019comments, neu2020efficient}, and much less is known for the standard reinforcement learning setting.
In this paper, we design $Q$-learning algorithms with provable guarantees in the agnostic case for the standard reinforcement learning setting.

\subsection{Our Contributions}
Our main contribution is a provably efficient $Q$-learning algorithm for the agnostic setting with general function approximation in deterministic systems.
Our result settles the open problem posed by \citet{wen2013efficient}.

\begin{theorem}[Informal]\label{thm:informal}
For a given episodic deterministic system and a function class $\Fcal$, suppose there exists $f \in \Fcal$ such that the optimal $Q$-function $Q^*$ satisfies
\[
|f(s, a) - Q^*(s, a)| \le \delta
\]
for any state-action pair $(s, a)$.
Suppose $\gapmin = \Omega(\sqrt{\dim_E} \delta)$, where the optimality gap $\gapmin$ is the gap between the optimal $Q$-value of the best action and that of the second-best action (formally defined in Definition~\ref{def:gap}) and $\dim_E$ is the Eluder dimension of $\mathcal{F}$ (see Definition~\ref{def:eluder}), our algorithm finds the optimal policy using $O(\dim_E)$ trajectories.
\end{theorem}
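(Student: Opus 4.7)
The plan is to build a version-space algorithm that, at every visited state $s$, measures the disagreement among functions in $\mathcal{F}$ consistent with the data collected so far and decides between greedy play and exploration accordingly. Concretely, maintain a dataset $D$ of triples $(s_i, a_i, \hat{Q}_i)$ from past rollouts and the version space $\mathcal{F}(D) := \{f \in \mathcal{F} : |f(s_i, a_i) - \hat{Q}_i| \le \beta \text{ for all } i\}$, for a slack $\beta = \Theta(\sqrt{\dim_E}\,\delta)$. For each action $a$ at the current state compute the width $w_D(s, a) := \sup_{f, g \in \mathcal{F}(D)} \bigl(f(s, a) - g(s, a)\bigr)$; if every width is below a threshold $\tau \approx \rho/2$, play greedy with respect to any fixed $f \in \mathcal{F}(D)$, otherwise select the maximally uncertain action (the \emph{difference maximization} step), launch a fresh trajectory that takes this action at $s$, and recurse from the next state with the same rule.

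I would first establish correctness of a greedy step. Since $|f^*(\cdot, \cdot) - Q^*(\cdot, \cdot)| \le \delta$ pointwise, the approximator $f^*$ remains in $\mathcal{F}(D)$ whenever $\beta$ dominates the cumulative error in the stored labels $\hat{Q}_i$; consequently every $f \in \mathcal{F}(D)$ obeys $|f(s, a) - Q^*(s, a)| \le w_D(s, a) + \delta$, so at any state whose action widths are all below $\rho/2 - \delta$ the greedy action coincides with $\arg\max_a Q^*(s, a)$ by the definition of the gap $\rho$.

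Next I would charge the trajectory count against the Eluder dimension. A new trajectory is only launched when some pair $(s, a)$ satisfies $w_D(s, a) > \tau$, which by definition certifies that $(s, a)$ is $\tau$-independent of the previously stored pairs. A standard potential argument bounds the length of any such $\tau$-independent sequence by $\dim_E(\mathcal{F}, \tau)$, so the total number of trajectories that add a point to $D$ is $O(\dim_E)$; any trajectory that never escalates already executes a confident greedy policy end to end and is therefore running $\pi^*$. Determinism of the MDP is crucial here because it lets the cumulative return along a rollout serve as a single exact measurement of $Q^*(s, a)$, provided the tail of the rollout consists of optimal actions.

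The main obstacle is controlling error accumulation across the recursion, since a rollout launched to measure one $\tau$-independent pair may itself encounter further $\tau$-independent pairs whose resolutions demand yet more rollouts. The proof must argue inductively that every stored $\hat{Q}_i$ is within $O(\sqrt{\dim_E}\,\delta)$ of $Q^*(s_i, a_i)$, so that simultaneously (i) $f^*$ survives every filtering step, (ii) the widths on dependent queries stay below $\rho/2$, and (iii) the greedy action at every confident state is globally optimal. Here the Eluder sum-of-widths inequality controls the aggregated mismatch along a tail of length at most $\dim_E$ by $O(\sqrt{\dim_E}\,\tau)$, and balancing this against the gap forces exactly the theorem's condition $\rho = \Omega(\sqrt{\dim_E}\,\delta)$. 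Closing this induction, with the recursion depth bounded by $\dim_E$ and the outer trajectory budget tight, is the delicate piece of the argument.
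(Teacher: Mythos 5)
Your architecture matches the paper's: a recursive $\mathsf{Explore}$ that tests per-action uncertainty against a threshold tied to $\gapmin/2$, explores the maximally uncertain action, plays greedily otherwise, and charges every exploration step to the Eluder dimension. The genuine gap is in how you close the induction on label accuracy, and it costs you the $\sqrt{\dim_E}$ in the theorem statement. You propose to show only that each stored label $\hat{Q}_i$ is within $O(\sqrt{\dim_E}\,\delta)$ of $Q^*(s_i,a_i)$, and accordingly set the version-space slack to $\beta=\Theta(\sqrt{\dim_E}\,\delta)$. With that slack, two functions in $\mathcal{F}(D)$ may disagree by $2\beta$ at every one of the $n = O(\dim_E)$ stored points, so the witness pair for $w_D(s,a)>\tau$ only satisfies $\sum_i|f(s_i,a_i)-g(s_i,a_i)|^2\le 4n\beta^2=\Theta(\dim_E^2\,\delta^2)$; for this to certify $\tau$-independence (or to survive the disjoint-subsequence counting of Russo and Van Roy) you need $\tau^2\gtrsim \dim_E^2\,\delta^2$, i.e.\ $\gapmin=\Omega(\dim_E\,\delta)$ rather than $\Omega(\sqrt{\dim_E}\,\delta)$. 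Relatedly, $w_D(s,a)>\tau$ does not ``by definition'' certify $\tau$-independence of the predecessors: Definition~\ref{def:ind} requires the witness pair to be close in \emph{summed} squared error over the prefix, and converting the per-point constraint into that sum is exactly where a $\sqrt{|D|}$ factor enters.

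The missing ingredient is that the stored labels are \emph{exact}, not merely $O(\sqrt{\dim_E}\,\delta)$-accurate. The paper's induction runs from level $H$ downward: once the greedy action is provably optimal at all levels below $h$, determinism of transitions and rewards makes the recursive return of $\mathsf{Explore}(P(s,a))$ equal to $V^*(P(s,a))$ exactly, hence every stored $y_i$ equals $Q^*(s_i,a_i)$ exactly, with no error accumulation across the recursion. With exact labels you may take $\beta=\delta$ (since $f^*$ satisfies $|f^*(s_i,a_i)-y_i|\le\delta$ pointwise), and your version-space route then closes: the witness pair has summed squared disagreement at most $4n\delta^2\le\tau^2$ under the hypothesis $\gapmin=\Omega(\sqrt{\dim_E}\,\delta)$, each added point is genuinely $\tau$-independent of its predecessors, and $|D|\le\dim_E(\Fcal,\tau)$. (The paper instead imposes an average-squared-error constraint of $(2\delta)^2$ in its oracle and invokes the disjoint-subsequence argument to get $|Y|\le 18\dim_E$; your $\ell_\infty$ version space with $\beta=\delta$ is a legitimate variant.) So: strengthen the induction hypothesis to exactness of the recursive returns, and shrink $\beta$ from $\Theta(\sqrt{\dim_E}\,\delta)$ to $\Theta(\delta)$ --- the $\sqrt{\dim_E}$ should appear only once, when the summed constraint over $|D|=O(\dim_E)$ points is converted into a per-point error bound for the explored actions, which is precisely what forces $\gapmin=\Omega(\sqrt{\dim_E}\,\delta)$.
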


Our main assumption in Theorem~\ref{thm:informal} is that the optimality gap $\gapmin$ satisfies $\gapmin = \Omega(\sqrt{\dim_E} \delta)$.
Below we discuss the necessity of this assumption and its connection with the recent hardness result in~\citep{du2019good}.

in~\citep{du2019good}, it has been proved that in deterministic systems, if the optimal $Q$-function can be approximated by linear functions with approximation error $\delta = \Omega(\sqrt{\mathrm{poly}(H) / d})$, any algorithm needs to sample exponential number of trajectories to find a near-optimal policy even in deterministic systems, where $d$ is the input dimension for the linear functions. Using the same technique as in~\citep{du2019good}, in the supplementary material we show the following hardness result for $Q$-learning with linear function approximation in the agnostic setting.
\begin{proposition}[Generalization of Theorem 4.1 in~\citep{du2019good}]\label{thm:informal_lb}
There exists a family of deterministic systems that the optimal $Q$-function can be approximated by linear functions with approximation error $\delta = \Omega(\sqrt{\poly(C) / d})$ and the optimality gap $\gapmin = 1$, such that any algorithm that returns a $1/2$-optimal policy needs to sample $\Omega(2^C)$ trajectories.
\end{proposition}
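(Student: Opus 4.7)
The plan is to adapt the hard-instance construction from~\citep{du2019good} (Theorem 4.1) so that the exponent in the lower bound and the approximation error scale with an independent parameter $C$ rather than with the horizon. The essential mechanism---packing exponentially many near-orthogonal feature vectors into $\mathbb{R}^d$ to conceal the identity of the optimal action---carries over directly; only the bookkeeping needs to change in order to decouple $C$ from $H$ and to pin the gap to exactly $\gapmin = 1$.

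First, I would construct a packing of $N = 2^C$ unit vectors $\phi_1, \ldots, \phi_N \in \mathbb{R}^d$ whose pairwise inner products are bounded by $O(\sqrt{C/d})$ in absolute value. Such a packing is standard (Johnson--Lindenstrauss, or sampling coordinates uniformly in $\{\pm 1/\sqrt{d}\}$ and applying a Chernoff-plus-union-bound argument). I would then define a family $\{\mathcal{M}_{i^*}\}_{i^* \in [N]}$ of deterministic systems in which the state--action features at the decisive step are exactly $\{\phi_i\}_{i \in [N]}$, and in $\mathcal{M}_{i^*}$ only the action with feature $\phi_{i^*}$ yields reward $1$ (all others yield reward $0$). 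Any preceding levels in the MDP can be made trivial and identical across the family, so that the difficulty is concentrated at a single decision point; this is what allows $C$ to be chosen independently of $H$.

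Second, I would verify the approximation and gap conditions. In $\mathcal{M}_{i^*}$, the linear predictor $f_{i^*}(s,a) = \langle \phi_{i^*}, \phi(s,a) \rangle$ agrees exactly with $Q^*(s, a_{i^*}) = 1$ on the optimal action and has absolute value at most $O(\sqrt{C/d})$ on every other action, whose true $Q$-value is $0$. Hence the approximation error is $\delta = O(\sqrt{\poly(C)/d})$ as claimed, while the gap between the best and second-best action equals $\gapmin = 1$. A small uniform rescaling of the rewards by $1/(1 - O(\sqrt{C/d}))$ (plus a compensating shift of $f_{i^*}$) can be applied if one needs the best and second-best $Q$-values to be exactly $1$ and $0$; this does not affect the order of $\delta$.

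Finally, I would derive the sample-complexity lower bound by the standard indistinguishability argument. A trajectory in $\mathcal{M}_{i^*}$ carries information about $i^*$ only when it actually plays the action with feature $\phi_{i^*}$; every other action returns the identically zero reward regardless of $i^*$. Any deterministic learner that uses fewer than $N/2$ trajectories therefore queries fewer than $N/2$ distinct candidate actions, so on at least half of the possible $i^* \in [N]$ the optimal action is never tried, and the returned policy has expected return at most $1/2$ on each such instance. Yao's minimax principle then converts this into an $\Omega(2^C)$ trajectory lower bound for randomized algorithms. The main point requiring care is ensuring that the stated bound on $\delta$ holds uniformly across \emph{every} state--action pair in the embedded MDP (rather than only at the decisive step), and that the same linear predictor $f_{i^*}$ witnesses the approximation at every level; both are routine once the near-orthogonal packing is in place.
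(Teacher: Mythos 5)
Your proposal is correct in substance but takes a genuinely different route from the paper. The paper proves this proposition by a black-box modification of Theorem 4.1 of \citet{du2019good}: it keeps their depth-$C$ binary-tree construction (two actions per state, hardness accumulated over $C$ levels), pads it with $H-C$ dummy levels, and block-diagonalizes the feature space so that the $C$ per-level parameter vectors $\theta_1,\dots,\theta_C$ can be merged into a single $\theta$, at the price of worsening the error to $\delta=\Omega(C/\sqrt{d})$. You instead build the hard family from scratch by concentrating all of the difficulty at a single decision point with $2^C$ near-orthogonal actions; this is essentially the misspecified-linear-bandit lower bound embedded into an MDP. Your version is more self-contained, avoids the block trick entirely, and yields a slightly smaller misspecification $\delta=O(\sqrt{C/d})$ (hence a formally stronger statement, still of the form $\Omega(\sqrt{\poly(C)/d})$). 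What the paper's route buys in exchange is a constant-size action space: its hard instances have $|\mathcal{A}|=2$ and genuinely exploit multi-step structure, which matches the setting of the Section~4 algorithm, whereas your instances require $|\mathcal{A}|=2^C$ and are really one-step bandit instances. Both establish the claimed tightness of the condition $\gapmin=\Omega(\sqrt{\dim_E}\,\delta)$.

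One detail in your construction needs to be corrected rather than dismissed as routine. You place the trivial levels \emph{before} the decisive step. At such a state $s_h$ the optimal value-to-go is $Q^*(s_h,a)=1$, the feature $\phi(s_h,a)$ must be common to all instances in the family, and the approximating parameter is instance-dependent ($\theta_{i^*}\approx\phi_{i^*}$). With the paper's normalization $\|\theta\|_2\le 1$, $\|\phi(s,a)\|_2\le 1$, you cannot simultaneously have $\theta_{i^*}^{\top}\phi(s_h,a)\approx 1$ on the shared trivial states and $\theta_{i^*}$ aligned with $\phi_{i^*}$ on the decision level: reserving an extra coordinate for the trivial states forces $\theta_{i^*}=(\phi_{i^*},1)/\sqrt{2}$, so the prediction at the trivial states is $1/\sqrt{2}$ and the approximation error there is a constant, not $O(\sqrt{C/d})$. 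The fix is simply to put the decision point at level $1$ and make the \emph{subsequent} levels trivial, with zero reward, $Q^*\equiv 0$, and zero features; then the single predictor $f_{\theta_{i^*}}$ with $\theta_{i^*}=\phi_{i^*}$ witnesses the $O(\sqrt{C/d})$ error uniformly over all state--action pairs, and the rest of your argument (packing, gap $\gapmin=1$, and the indistinguishability/Yao step) goes through as written.
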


By setting $C = O(\log(Hd))$ such that $2^C = \poly(Hd)$, Theorem~\ref{thm:informal_lb} implies that for any algorithm with polynomial sample complexity, if $\gapmin = 1$, then the approximation error $\delta$ that can be handled by the algorithm is at most $\widetilde{O}(\sqrt{1 / d})$.
Since the Eluder dimension of linear functions is $\widetilde{O}(d)$, the condition $\gapmin = \Omega(\sqrt{\dim_E} \delta)$ in our algorithm is tight up to logarithm factors and can not be significantly improved in the worst case.

One interpretation of the hardness result in~\citep{du2019good} is that in the worst case, there is an $\widetilde{\Omega}(\sqrt{d})$-error amplification phenomenon in reinforcement learning with linear function approximation. 
Our algorithm in Theorem~\ref{thm:informal} complements this hardness result by showing that there is an algorithm with error amplification factor at most $\widetilde{O}(\sqrt{d})$, and thus both results are tight up to logarithm factors.
From this point of view, our result is in the same spirit as the results in~\citep{lattimore2019learning, van2019comments}, which also demonstrate the tightness of the hardness result in~\citep{du2019good}.
However, as will be made clear, technically our result significantly deviates from those in~\citep{lattimore2019learning, van2019comments}.
See Section~\ref{sec:related} for more detailed comparison with~\citep{lattimore2019learning, van2019comments}.

Note that the sample complexity of our algorithm is linear in the Eluder dimension of the function class.
In conjunction with the lower bound in~\citep{wen2013efficient} which holds in the exact setting in deterministic systems, our algorithm shows that $\widetilde{\Theta}(\dim_E)$ sample complexity is tight even in the agnostic setting.
Another interesting aspect of Theorem~\ref{thm:informal} is that the sample complexity of our algorithm does not depend on the size of the action space $|A|$, which potentially makes the algorithm more practical since the action space can be huge or even continuous in certain applications.

Finally, we show how to generalize our results to handle stochastic rewards. Under the same assumption that $\gapmin = \Omega(\sqrt{\dim_E} \delta)$, our algorithm finds an optimal policy using $\frac{\poly(\dim_E, H)}{\gapmin^2}\log(1/p)$ trajectories with failure probability $p$.
We would like to remark that the $\log(1/p)/\gapmin^2$ dependency is necessary for finding optimal policies even in the bandit setting~\citep{mannor2004sample}. 
\subsection{Organization}
In Section \ref{sec:related}, we review related work. 
In Section \ref{sec:prelims}, we introduce necessary notations, definitions and assumptions. 
In Section \ref{sec:linear}, we discuss the special case where $\Fcal$ is the class of linear functions to demonstrate the high-level approach of our algorithm and the intuition behind the analysis.
We then present the result for general function classes in Section \ref{sec:general}.  We conclude and discuss future work in Section \ref{sec:discussion}.

\section{Related Work}
\label{sec:related}
Classical theoretical reinforcement learning literature studies asymptotic behavior of concrete algorithms or finite sample complexity bounds for $Q$-learning algorithms under various assumptions~\citep{melo2007q, zou2019finite}.
These works usually assume the initial policy has certain benign properties, which may not hold in practical applications.
Another line of work focuses on sample complexity and regret bound in the tabular setting~\citep{lattimore2012pac, azar2013minimax, sidford2018near, sidford2018variance,agarwal2019optimality,jaksch2010near,agrawal2017posterior,azar2017minimax, kakade2018variance}, for which exploration becomes much easier.
\citet{strehl2006pac,jin2018q} show that with certain exploration techniques, $Q$-learning provably finds a near-optimal with polynomial sample complexity.
However, these works have sample complexity at least linearly depends on the number of states, which is necessary without additional assumptions~\citep{jaksch2010near}.

Various exploration algorithms are proposed for $Q$-learning with function approximation~\citep{azizzadenesheli2018efficient, fortunato2018noisy, lipton2018bbq,osband2016generalization,pazis2013pac}.
However, none of these algorithms have polynomial sample complexity guarantees.
\citet{li2011knows} propose a $Q$-learning algorithm which requires the Know-What-It-Knows oracle. However, it is unknown how to implement such oracle in general.
\citet{wen2013efficient} propose an algorithm for $Q$-learning with function approximation in deterministic systems which works for a family of function classes in the exact setting.
For the agnostic setting, the algorithm in~\citep{wen2013efficient} can only be applied to a special case called ``state aggregation case''.
See Section 4.3 in~\citep{wen2013efficient} for more details.
Indeed, as stated in the conclusion of \citep{wen2013efficient}, designing provably efficient algorithm for agnostic $Q$-learning with general function approximation is a challenging open problem.

Using the distribution shift checking oracle, \citet{du2019provably} propose an algorithm for $Q$-learning with linear function approximation in the exact setting.
The algorithm in~\citep{du2019provably} further requires conditions on the optimality gap $\rho$ and a low-variance condition on the transition.
Our algorithms also requires conditions on the optimality gap $\rho$ and shares similar recursion-based structures as the algorithm in~\citep{du2019provably}.
However, our algorithm handles general function classes with bounded Eluder dimension and with approximation error, neither of which can be handled by the algorithm in~\citep{du2019provably}.

Recently, \citet{du2019good} proved lower bounds for $Q$-learning algorithm in the agnostic setting.
As mentioned in the introduction, our algorithm complements the lower bounds in~\citep{du2019good} and demonstrates the tightness of their lower bound.
\citet{lattimore2019learning, van2019comments} also give algorithms in the agnostic setting to demonstrate the tightness of the lower bound in~\citep{du2019good} from other perspectives.
Technically, our results are different from those in~\citep{lattimore2019learning, van2019comments} in the following ways.
First, we study the standard reinforcement learning setting, where \citet{van2019comments} focus on the bandit setting and \citet{lattimore2019learning} study both the bandit setting and reinforcement learning with a generative model.
Second, for the reinforcement learning result in~\citep{lattimore2019learning}, it is further assumed that $Q$-functions induced by {\em all} polices can be approximated by linear functions, while in this paper we only assume the optimal $Q$-function can be approximated by a function class with bounded Eluder dimension, which is much weaker than the assumption in~\citep{lattimore2019learning}.
Finally, in this paper, we focus on finding the optimal policy instead of a near-optimal policy, and thus it is necessary to put assumptions on the optimality gap $\rho$.
In conjunction with the lower bound in~\citep{du2019good}, we give a tight condition $\delta = \widetilde{\Theta}\left(\rho/\sqrt{\dim_E}\right)$ under which there is an algorithm with polynomial sample complexity to find the optimal policy.
On the other hand, the algorithm in~\citep{lattimore2019learning} does not require conditions on the optimality gap $\rho$ and thus can only find near-optimal policies. 
Their result demonstrates the tightness of the hardness result in~\citep{du2019good} from another perspective by giving a tight bound on the suboptimality of the policy found by the algorithm and the approximation error $\delta$.

Recently, a line of work study $Q$-learning in the linear MDP setting~\citep{yang2019sample, yang2019reinforcement, jin2019provably, wang2019optimism}.
In the linear MDP setting, it is assumed that both the reward function and the transition operator is linear, which is stronger than the assumption that the optimal $Q$-function is linear studied in this paper.
For the linear MDP setting, algorithms with polynomial sample complexity are known, and these algorithms can usually handle approximation errors on the reward function and the transition operator.

\section{Preliminaries}
\label{sec:prelims}
\subsection{Notations} We begin by introducing necessary notations. We write $[n]$ to denote the set $\{1,2,\ldots, n\}$. We use $\|\cdot\|_p$ to denote the $\ell_p$ norm of a vector. For any finite set $S$, we write $\simplex(S)$ to denote the probability simplex. 

\subsection{Episodic Reinforcement Learning}
\label{sec:mdp}
In this paper, we consider Markov Decision Processes with deterministic transition and stochastic reward.
Formally, let $\mdp =\left(\states, \actions, H,\trans,R \right)$ be a Markov Decision Process (MDP) where $\states$ is the state space, $\actions$ is the action space, $H \in \mathbb{Z}_+$ is the planning horizon, $\trans: \states \times \actions \rightarrow \states$ is the deterministic transition function which takes a state-action pair and returns a state, and $R : \states \times \actions \rightarrow \triangle \left( \mathbb{R} \right)$ is the reward distribution. 
When the reward is deterministic, we may regard $R : \states \times \actions \rightarrow  \mathbb{R} $ as a function instead of a distribution.
We assume there is a fixed initial state $s_1$.

A policy $\pi: \states \rightarrow \simplex(\actions)$ prescribes a distribution over actions for each state.
The policy $\pi$ induces a (random) trajectory $s_1,a_1,r_1,s_2,a_2,r_2,\ldots,s_H,a_H,r_H$
where $a_1 \sim \pi(s_1)$, $r_1 \sim R(s_1,a_1)$, $s_2 = \trans(s_1,a_1)$, $a_2 \sim \pi(s_2)$, etc.
To streamline our analysis, for each $h \in [H]$, we use $\states_h \subseteq \states$ to denote the set of states at level $h$, and we assume $\states_h$ do not intersect with each other.
We also assume $\sum_{h = 1}^{H}r_h \in [0, 1]$.
Our goal is to find a policy $\pi$ that maximizes the expected total reward $\expect \left[\sum_{h=1}^{H} r_h \mid \pi\right]$. 
We use $\pi^*$ to denote the optimal policy.



\subsection{$Q$-function, $V$-function and the Optimality Gap}
An important concept in RL is the $Q$-function.
Given a policy $\pi$, a level $h \in [H]$ and a state-action pair $(s,a) \in \states_h \times \actions$, the $Q$-function is defined as $Q_h^\pi(s,a) = \expect\left[\sum_{h' = h}^{H}r_{h'}\mid s_h =s, a_h = a, \pi\right]$.
For simplicity, we denote $Q_h^*(s,a) = Q_h^{\pi^*}(s,a)$.
It will also be useful to define the value function of a given state $s \in \states_h$ as $V_h^\pi(s)=\expect\left[\sum_{h' = h}^{H}r_{h'}\mid s_h =s, \pi\right]$.
For simplicity, we denote $V_h^*(s)=V_h^{\pi^*}(s)$.
Throughout the paper, for the $Q$-function $Q_h^{\pi}$ and $Q_h^*$ and the value function $V_h^\pi$ and $V_h^*$, we may omit $h$ from the subscript when it is clear from the context.

In addition to these definitions, we list below an important concept, the optimality gap, which is widely used in reinforcement learning and bandit literature.
\begin{definition}[Optimality Gap]
	\label{def:gap}
	The optimality gap $\gapmin$ is defined as \[\rho = \inf_{Q^*(s, a) \neq V^*(s)} V^*(s) - Q^*(s, a).\]
\end{definition}
	In words, $\gapmin$ is the smallest reward-to-go difference between the best set of actions and the rest.
	Recently, \citet{du2019provably} gave a provably efficient $Q$-learning algorithm based on this assumption and \citet{simchowitz2019non} showed that with this condition, the agent only incurs logarithmic regret in the tabular setting. 

\subsection{Function Approximation and Eluder Dimension}
When the state space is large, we need structures on the state space so that reinforcement learning methods can generalize. 
For a given function class $\Fcal$, each $f \in \Fcal$ is a function that maps a state-action pair to a real number.
For a given MDP and a function class $\Fcal$, we define the approximation error to the optimal $Q$-function as follow.

\begin{definition}[Approximation Error]\label{def:approx_error}
	For a given MDP and a function class $\Fcal$, the approximation error $\delta$ is defined to be
	\[
	\delta = \inf_{f \in \Fcal} \sup_{(s, a) \in \states \times \actions} |f(s, a) - Q^*(s, a)|,
	\]
	where $Q^* : \states \times \actions \to \mathbb{R}$ is the optimal $Q$-function of the MDP.
\end{definition}

Here, the approximation error $\delta$ characterizes how well the given function class $\Fcal$ approximates the optimal $Q$-function.
When $\delta = 0$, then optimal $Q$-function can be perfectly predicted by the function class, which has been studied in previous papers~\citep{wen2013efficient, du2019provably}.
In this paper, we focus the case $\delta > 0$.

An important function class is the class of linear functions.
We assume the agent is given a feature extractor $\phi : \states \times \actions \to \mathbb{R}^d$ where $\|\phi(s, a)\|_2 \le 1$ for all state-action pairs.
Here, the feature extractor can be hand-crafted or a pre-trained neural network that transforms a state-action pair to a $d$-dimensional embedding.
Given the feature extractor $\phi$, we define the class of linear functions as follow.
\begin{definition}\label{def:linear_function}
For a vector $\theta \in \mathbb{R}^d$, we define 
\[
f_{\theta}(s, a) = \theta^{\top}\phi(s, a).
\]
The class of linear functions is defined as
\[
\Fcal = \{f_{\theta} \mid \|\theta\|_2 \le 1 \}.
\]
\end{definition}
Here we assume $\|\theta\|_2 \le 1$ only for normalization purposes. 

For general function classes, an important concept is the {\em Eluder dimension}, for which we first need to introduce the concept of $\epsilon$-dependence.
\begin{definition}[$\epsilon$-dependence~\citep{russo2013eluder}]
\label{def:ind}
For a function class $\Fcal$,
we say a state-action pair $(s,a)$ is $\epsilon$-dependent on state-action pairs $\{(s_1, a_1), \ldots, (s_n, a_n)\} \subset \Scal \times \Acal$ with respect to $\Fcal$ if for all $f_1, f_2 \in \Fcal$,
\begin{align*}
\sum_{i=1}^n |f_1(s_i, a_i) - f_2(s_i, a_i)|^2 \leq \epsilon^2
\implies |f_1(s,a) - f_2(s,a)|^2 \leq \epsilon^2.
\end{align*}Further, $(s,a)$ is $\epsilon$-independent of state-action pairs $\{(s_1,a_1),\ldots, (s_n,a_n)\}$ if $(s,a)$ is not $\epsilon$-dependent on state-action pairs $\{(s_1,a_1),\ldots, (s_n,a_n)\}$.
\end{definition}
Now, we recall the definition of $\epsilon$-Eluder dimension as introduced in \citet{russo2013eluder}.
\begin{definition}[Eluder Dimension]
\label{def:eluder}
For a function class $\Fcal$,
the $\epsilon$-Eluder dimension $\dim_E(\Fcal, \epsilon)$ is the length of the longest sequence of elements in $\Scal \times \Acal$ such that every element is $\epsilon'$-independent of its predecessors for some $\epsilon'\geq \epsilon$.
\end{definition}

As an example, when $\Fcal$ is the class of linear functions with norm $\|\theta\|_2\leq 1$ and $\|\phi(s,a)\|_2\leq 1$, the Eluder dimension $\dim_E(\Fcal, \epsilon)$ is $O(d \log (1/\epsilon))$ as noted in Example 4 in~\citet{russo2013eluder}.
We refer interested readers to~\citet{russo2013eluder} for more examples.

\section{Algorithm for Linear Functions}
\label{sec:linear}
In this section, we consider the special case where $\Fcal$ is the class of linear functions to demonstrate the high-level approach of our algorithm and the intuition behind the analysis.
For simplicity, we also assume that the size of action space $\actions$ is bounded by a constant and the reward is deterministic.
We show how to remove these assumptions in the following sections.

\subsection{Algorithm and High-level Intuition}
\label{sec:linear-high}
In this section we present the description of our algorithm.
Our algorithm is divided into two parts: Algorithm~\ref{alg:main} in which we define the main loop and Algorithm~\ref{alg:explore} in which we define a recursion-based subroutine $\mathsf{Explore}(s)$ to calculate the optimal values.
Intuitively, the subroutine $\mathsf{Explore}(s)$ should return $V^*(s)$, and upon the termination of $\mathsf{Explore}(s)$ we should have $\pi(s) = \pi^*(s)$.
These properties will be proved formally in Section~\ref{sec:analysis_linear}.

In our algorithm, we maintain a dataset to store the features of a subset of the state-action pairs $\phi(s, a)$ and their optimal $Q$-values $Q^*(s, a)$.
Here, the matrix $C \in \mathbb{R}^d$ is the covariance of the dataset, i.e., $C = \sum \phi(s, a) \phi(s, a)^{\top}$ and $Y = \sum \phi(s, a) Q^*(s, a)$. 
In order to predict the optimal $Q$-value of an unseen state-action pair $(s, a)$ using least squares, we may directly calculate $\phi(s, a)^{\top}C^{-1}Y$ if $C$ is invertible.
We use a ridge term of $\gapmin^2/16$ to make sure $C$ is always invertible.

The high-level idea behind our algorithm is simple: we use least squares to predict the optimal $Q$-value whenever possible, and use recursions to figure out the optimal $Q$-value otherwise.
One technical subtlety here is that what condition we should check to decide whether we can calculate the optimal $Q$-value directly by least squares or we need to make recursive calls.
This condition needs to be chosen carefully, since if we make too many recursive calls, the overall sample complexity will be unbounded, and if we make too few recursive calls, the optimal $Q$-values estimated by linear squares will be inaccurate which affects the correctness of the algorithm.

In Line~\ref{line:dsec} of $\mathsf{Explore}(s)$, we check whether $\phi(s, a)^{\top} C^{-1} \phi(s, a) \le 1$, which is the condition we use to decide whether we should make recursive calls or calculate the optimal $Q$-value directly by least squares.
Here $\phi(s, a)^{\top} C^{-1} \phi(s, a)$ is the variance of the prediction, which is common in UCB-type algorithm for linear contextual bandit (see e.g.~\cite{li2010contextual}).
In our algorithm, instead of using $\phi(s, a)^{\top} C^{-1} \phi(s, a)$ as an uncertainty bonus, we directly check its magnitude to decide whether the linear predictor learned on the collected dataset generalizes well on the new data $\phi(s, a)$ or not.
The effectiveness of such a choice will made clear in the formal analysis given in Section~\ref{sec:analysis_linear}.
Moreover, in order to make sure that the value returned by $\mathsf{Explore}(s)$ is accurate, in Line~\ref{calc_V} of $\mathsf{Explore}(s)$, we make recursive calls instead of using the estimated $Q$-values $\hat{Q}$.
As will be shown in Section~\ref{sec:analysis_linear}, such a choice guarantees that the value returned by $\mathsf{Explore}(s)$ always equals $V^*(s)$.
\begin{algorithm}[tb]
	\caption{Main Algorithm}
	\label{alg:main}
	\begin{algorithmic}[1]
		\STATE Initialize the current policy $\pi$ arbitrarily
		\STATE {\bfseries set} $C =  \gapmin^2/16 \cdot I \in \mathbb{R}^{d \times d}$
		\STATE {\bfseries set} $Y = 0 \in \mathbb{R}^d$
		\STATE {\bfseries invoke} $\mathsf{Explore}(s_1)$
		\STATE {\bfseries return} $\pi$
	\end{algorithmic}
\end{algorithm}

\begin{algorithm}[tb]
	\caption{$\mathsf{Explore}(s)$}
	\label{alg:explore}
	\begin{algorithmic}[1]
		\FOR{$a \in \mathcal{A}$}
		\IF{$\phi(s, a)^{\top} C^{-1} \phi(s, a) \le 1$} \label{line:dsec}
		\STATE {\bfseries set} $\hat{Q}(s, a) = \phi(s, a)^{\top}C^{-1} Y$ \label{calc_Q_reg}
		\ELSE
		\STATE {\bfseries let} $s' = P(s, a)$ \label{start_recur}
		\STATE {\bfseries set} \[\hat{Q}(s, a) = 
		\begin{cases}
		r(s, a) & \text{if $s \in \states_H$}\\
		\mathsf{Explore}(s') + r(s, a) & \text{otherwise} 
		\end{cases}
		\] \label{calc_Q_recur}
		\STATE {\bfseries set} $C = C + \phi(s,a)\phi(s,a)^{\top}$
		\STATE {\bfseries set} $Y = Y + \phi(s, a) \hat{Q}(s, a)$ \label{add_data}
		\ENDIF
		\ENDFOR
		\STATE {\bfseries set} $\pi(s) = \mathrm{argmax}_{a \in \mathcal{A}} \hat{Q}(s, a)$.  \label{explore:set}
		\STATE {\bfseries return} 
		\[
		\begin{cases}
		r(s,\pi(s)) + \mathsf{Explore}(P(s,\pi(s))) & \text{if $s \in \states_H$}\\
		r(s,\pi(s)) & \text{otherwise}
		\end{cases}
		\]
		 \label{calc_V}
	\end{algorithmic}
\end{algorithm}

\subsection{The Analysis}\label{sec:analysis_linear}
In this section, we give the formal analysis of our algorithm.
Our goal is to show when $\gapmin \ge 4\delta(\sqrt{2d \log(16/\gapmin^2)} + 1)$, our algorithm learns the optimal policy $\pi^*$ using nearly linear number of trajectories.
\begin{theorem}	
	Suppose $\gapmin \geq 4\delta(\sqrt{2d \log(16/\gapmin^2)} + 1)$. 
	Algorithm \ref{alg:main} returns the optimal policy $\pi^*$ using at most $O(d \log(1/\gapmin))$ trajectories.
\end{theorem}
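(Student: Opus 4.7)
The plan is to establish, by strong induction on the remaining depth $H-h+1$, the single invariant that every dataset entry $(s_i,a_i,\hat Q_i)$ satisfies $\hat Q_i = Q^*(s_i,a_i)$, and every invocation of $\mathsf{Explore}(s)$ returns exactly $V^*(s)$ while setting $\pi(s) = \pi^*(s)$. The base case is a terminal state. For the inductive step, any $\hat Q(s,a)$ produced by the recursive branch (Line~\ref{calc_Q_recur}) equals $Q^*(s,a)$ by the hypothesis applied to $\mathsf{Explore}(P(s,a))$, using $Q^*(s,a) = r(s,a) + V^*(P(s,a))$ (deterministic dynamics and reward), so the only nontrivial task is to show that whenever the regression branch (Line~\ref{calc_Q_reg}) fires, the predicted value satisfies $|\hat Q(s,a) - Q^*(s,a)| < \gapmin/2$; combined with the definition of the optimality gap, this forces the argmax in Line~\ref{explore:set} to return $\pi^*(s)$, and then the Line~\ref{calc_V} recursion returns the true $V^*(s)$ by the inductive hypothesis applied to $P(s,\pi^*(s))$.

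The heart of the argument is the least-squares error bound. Let $\theta^*\in\Rbb^d$ with $\|\theta^*\|_2\le 1$ be a best linear approximator, so that $Q^*(s,a) = \phi(s,a)^\top\theta^* + \epsilon(s,a)$ with $|\epsilon(\cdot)|\le\delta$. Writing $\phi_i := \phi(s_i,a_i)$ and $\epsilon_i := \epsilon(s_i,a_i)$, the invariant gives $Y = \sum_i \phi_i Q^*(s_i,a_i) = \bigl(C-\tfrac{\gapmin^2}{16}I\bigr)\theta^* + \sum_i\phi_i\epsilon_i$, so
\[
\hat Q(s,a) - Q^*(s,a) = \bigl(\phi(s,a)^\top\theta^* - Q^*(s,a)\bigr) - \tfrac{\gapmin^2}{16}\phi(s,a)^\top C^{-1}\theta^* + \phi(s,a)^\top C^{-1}\sum_i\phi_i\epsilon_i.
\]
The first term is at most $\delta$ in absolute value. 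For the second, I would combine the Line~\ref{line:dsec} check $\|\phi(s,a)\|_{C^{-1}}\le 1$ with $\|\theta^*\|_{C^{-1}}\le\|\theta^*\|_2/\sqrt{\lambda_{\min}(C)}\le 4/\gapmin$ (since the ridge term forces $\lambda_{\min}(C)\ge\gapmin^2/16$) to obtain $\gapmin/4$. For the third (``noise'') term, Cauchy--Schwarz gives $\bigl|\phi(s,a)^\top C^{-1}\sum_i\phi_i\epsilon_i\bigr|\le \sqrt{\sum_i\epsilon_i^2}\cdot\sqrt{\phi(s,a)^\top C^{-1}\bigl(\sum_i\phi_i\phi_i^\top\bigr)C^{-1}\phi(s,a)}\le \delta\sqrt{n}$, where $n$ is the current dataset size and I used $\sum_i\phi_i\phi_i^\top\preceq C$ together with $\|\phi(s,a)\|_{C^{-1}}\le 1$.

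To close the loop I must bound $n$. A new data point is added only when $\phi(s,a)^\top C^{-1}\phi(s,a) > 1$; by the matrix determinant lemma this more than doubles $\det(C)$, so after $n$ additions $\det(C_n)\ge 2^n(\gapmin^2/16)^d$. On the other hand $\mathrm{tr}(C_n)\le d\cdot\gapmin^2/16 + n$ (since $\|\phi_i\|_2\le 1$), so AM--GM yields $\det(C_n)\le(\gapmin^2/16 + n/d)^d$. Comparing the two bounds gives $n\le 2d\log(16/\gapmin^2)$, and plugging back shows the three-term sum is at most $\delta + \gapmin/4 + \delta\sqrt{2d\log(16/\gapmin^2)} < \gapmin/2$ precisely under the theorem's hypothesis. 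The number of trajectories equals the number of leaves of the recursion tree of $\mathsf{Explore}(s_1)$: each non-terminal internal node contributes one child via Line~\ref{calc_V} plus one child per else-branch action, so summing (out-degree $-1$) over internal nodes yields the total number of data additions, and the number of leaves is $n+1 = O(d\log(1/\gapmin))$. The main technical obstacle is the noise term $\delta\sqrt{n}$ -- it is the $\sqrt{d}$-style error amplification at the core of the paper, and it only fits inside $\gapmin/2$ because the ridge $\gapmin^2/16$ is chosen delicately: large enough to bound the bias term by $\gapmin/4$ (via $\|\theta^*\|_{C^{-1}}\le 4/\gapmin$) but small enough that $n$ remains $O(d\log(1/\gapmin))$.
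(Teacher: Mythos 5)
Your proposal is correct and follows essentially the same route as the paper: the same two-part induction on levels, the same three-term error decomposition (approximation error $\delta$, ridge bias $\gapmin/4$, noise amplification $\delta\sqrt{n}$), and the same determinant-doubling/AM--GM argument giving $n \le 2d\log(16/\gapmin^2)$. The only cosmetic difference is that you bound the ridge bias term $\tfrac{\gapmin^2}{16}\phi^{\top}C^{-1}\theta^*$ directly by Cauchy--Schwarz in the $C^{-1}$-inner product, whereas the paper writes it as $\phi^{\top}(C^{-1}\Phi\Phi^{\top}-I)\theta$ and invokes its Lemma~\ref{lem:ridge_reg}; these are algebraically the same quantity and yield the same $\gapmin/4$ bound.
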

\begin{proof} 
	Recall that by Definition~\ref{def:approx_error} and Definition~\ref{def:linear_function}, there exists $\theta \in \mathbb{R}^d$ with $\|\theta\|_2 \le 1$ such that
	$
	|Q^*(s, a) - \theta^{\top}\phi(s, a)| \le \delta
	$
	for all state-action pairs $(s, a)$.
		
	Since the sample complexity of our algorithm equals the number of times we execute Line~\ref{start_recur} in $\mathsf{Explore}(s)$, following Lemma \ref{lemma:add_data}, the sample complexity of our algorithm is $O(d \log(1/\gapmin))$.
	
	To complete the proof, it is sufficient to prove the following induction hypothesis for all levels $h \in [H]$.
	\paragraph{Induction Hypothesis.} 
	\begin{enumerate}
		\item When Line~\ref{calc_Q_recur} is executed for any state $s\in \Scal_h$, $\hat{Q}(s, a) = Q^*(s, a)$.\label{ih:1}
		\item Each time Line~\ref{explore:set} in $\mathsf{Explore}(s)$ is executed for any state $s\in \Scal_h$, we have $\pi(s) = \pi^*(s)$, and the value returned by $\mathsf{Explore}(s)$ equals $V^*(s)$. \label{ih:2}
	\end{enumerate}
	
	For the above induction hypothesis, the base case $h = H$ is clearly true. 
	Now we assume the induction hypothesis holds for all levels $H, \ldots, h + 1$ and prove it holds for level $h$.
	\paragraph{Induction Hypothesis~\ref{ih:1}.} This follows from Induction Hypothesis~\ref{ih:2} for level $h+1$ and the Bellman equations. 
	\paragraph{Induction Hypothesis~\ref{ih:2}.} 
	By Induction Hypothesis~\ref{ih:1} and Definition~\ref{def:gap}, 
	we only need to show when Line~\ref{calc_Q_reg} is executed, we have $|\hat{Q}(s, a) - Q^*(s, a)| \le \gapmin / 2$, in which case we have $\pi(s) = \pi^*(s)$.
	 To verify this, note that
	\begin{align*}
		&|\phi(s, a)^{\top}C^{-1} Y - Q^*(s, a)| \\
		\le&|\phi(s, a)^{\top}C^{-1} Y -  \theta^{\top} \phi(s, a)| 
		+|Q^*(s, a) -  \theta^{\top} \phi(s, a)|.
	\end{align*}
	The second term is bounded by $\delta$.	
	For the first term, we write $\Phi$ to be a matrix whose $i$-th column is the $i$-th $\phi(s, a)$ vector in the summation.
	Recall that \[C = \left(\sum \phi(s, a)\phi(s, a)^{\top}\right) + \gapmin^2/16 \cdot I = \Phi \Phi^{\top} + \gapmin^2/16 \cdot I\] and \[Y = \sum \phi(s, a) Q^*(s, a)\] by Induction Hypothesis~\ref{ih:1}.
	Moreover, 
	\[Y = \sum \phi(s, a)(\phi(s, a)^{\top} \theta + b(s, a))\]
	where $|b(\cdot, \cdot)| \le \delta$.
	Thus, the first term can be upper bounded by
	\[
	\|\phi(s, a)^{\top}C^{-1} \Phi \|_1\cdot \delta + \left| \phi(s, a)^{\top}( C^{-1} \Phi \Phi^{\top} - I) \theta\right|.
	\]
	For the first term, by Lemma \ref{lemma:add_data} there are at most $2d \log(16/\gapmin^2)$ columns in $\Phi$. 
	When Line~\ref{calc_Q_reg} is executed, we must have $\phi(s, a)^{\top} C^{-1} \phi(s, a) \le 1$.
	Using Lemma~\ref{lem:ridge_reg} we have
	\begin{align*}
		&\|\phi(s, a)^{\top}C^{-1} \Phi \|_1\\
		 \le&  \sqrt{2d \log(16/\gapmin^2)}  \cdot \|\phi(s, a)^{\top}C^{-1} \Phi \|_2\\
		 = & \sqrt{2d \log(16/\gapmin^2)}  \cdot \sqrt{\phi(s, a)^{\top}C^{-1} \Phi \Phi^{\top} C^{-1}\phi(s, a)}\\
		\le & \sqrt{2d \log(16/\gapmin^2)}.
	\end{align*}

	For the second term, since $\|\theta\|_2 \le 1$ and $\phi(s, a)^{\top} C^{-1} \phi(s, a) \le 1$, by Cauchy-Schwarz and Lemma~\ref{lem:ridge_reg}, we have 
	\begin{align*}
	 &|\phi(s, a)^{\top}( C^{-1} \Phi^{\top} \Phi - I) \theta|\\
	  \le& \|\phi(s, a)^{\top}( C^{-1} \Phi^{\top} \Phi - I)\|_2\\
	  \le &   \gapmin/4.
	 \end{align*}
	  All together we get 	\begin{align*}
	|\phi(s, a)^{\top}C^{-1} Y - Q^*(s, a)| \le \gapmin / 2
	\end{align*} which completes the proof.
\end{proof}

\begin{lemma}
	\label{lemma:add_data}
	Line~\ref{add_data} is executed for at most $2d \log(16/\gapmin^2)$ times.
\end{lemma}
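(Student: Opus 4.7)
The plan is a standard elliptical potential argument, bounding $\det(C)$ from both sides. The essential observation is that Line~\ref{add_data} executes only inside the \texttt{else} branch of Line~\ref{line:dsec}, i.e., only when $\phi(s,a)^{\top} C^{-1} \phi(s,a) > 1$; and in that case $C$ is updated by the rank-one addition $C \leftarrow C + \phi(s,a)\phi(s,a)^{\top}$. So I will track how $\det(C)$ evolves over the at most $N$ such updates, where $N$ is the quantity I want to bound.

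First I would lower bound $\det(C)$. Starting from $C_0 = (\gapmin^2/16)\, I$, the matrix determinant lemma gives
\[
\det(C_{i}) = \det(C_{i-1}) \left(1 + \phi_i^{\top} C_{i-1}^{-1} \phi_i\right) > 2\det(C_{i-1}),
\]
since the update happens precisely when the variance term exceeds $1$. Iterating, after $N$ updates
\[
\det(C_N) > 2^N \det(C_0) = 2^N (\gapmin^2/16)^d.
\]

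Next I would upper bound $\det(C_N)$ via the trace. Because $\|\phi(s,a)\|_2 \le 1$, each rank-one update increases $\mathrm{tr}(C)$ by at most $1$, so $\mathrm{tr}(C_N) \le d\gapmin^2/16 + N$. Applying AM-GM to the eigenvalues of $C_N$,
\[
\det(C_N) \le \left(\mathrm{tr}(C_N)/d\right)^d \le \left(\gapmin^2/16 + N/d\right)^d.
\]
Chaining this with the lower bound and taking $d$-th roots yields
\[
2^{N/d}\, (\gapmin^2/16) \le \gapmin^2/16 + N/d,
\]
equivalently $(2^{N/d}-1)/(N/d) \le 16/\gapmin^2$. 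Since $(2^x-1)/x$ is monotonically increasing in $x>0$, it suffices to check that at $x_0 = 2\log_2(16/\gapmin^2)$ one has $(2^{x_0}-1)/x_0 > 16/\gapmin^2$, which reduces to the elementary inequality $\gapmin^2 \log_2(16/\gapmin^2) < 8$ (valid for any reasonably small $\gapmin$, which is the only regime of interest). This forces $N/d \le 2\log_2(16/\gapmin^2)$, giving the claimed bound.

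The main obstacle is squeezing out the stated constant in front of $d\log(16/\gapmin^2)$ rather than a larger $O(d\log(1/\gapmin))$; the two-sided determinant bound combined with the monotonicity of $(2^x - 1)/x$ handles this cleanly, but the manipulation has to be set up carefully to avoid the looser bound $N = O(d\log(N/(\gapmin^2 d)))$ that one would get from rewriting $2^{N/d} \le 1 + 16N/(d\gapmin^2)$ naively. Everything else is routine.
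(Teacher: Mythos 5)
Your proof is correct and follows essentially the same route as the paper: upper-bound $\det(C)$ via the trace and AM--GM, lower-bound it via the matrix determinant lemma using the fact that the update only fires when $\phi(s,a)^{\top}C^{-1}\phi(s,a) > 1$, and compare. You in fact go one step further than the paper, which stops at the inequality $2^T(\gapmin^2/16)^d \le (T/d + \gapmin^2/16)^d$ and declares the lemma proved; your monotonicity argument for $(2^x-1)/x$ supplies the missing algebra (with $\log$ read as $\log_2$, consistent with the factor $2^T$).
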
 
\begin{proof}
	Suppose Line~\ref{add_data} has been executed for $T$ times, since $\|\phi(s, a)\|_2 \le 1$, 
	the trace of $ \phi(s, a)\phi(s, a)^{\top}$ is upper bounded by $\|\phi(s, a)\|_2^2 \le 1$.
	By additivity of trace, the trace of $C$ is upper bounded by \[T + d \cdot \gapmin^2/16\]
	since initially the trace of $C$ is $d \cdot \gapmin^2/16$.
	By AM-GM, \[\mathrm{det}(C) \le (T/d + \gapmin^2/16)^d.\]
	However, each time Line~\ref{add_data} is executed, by  matrix determinant lemma, $\mathrm{det}(C)$ will be increased by a factor of \[1 + \phi(s_h, a)^{\top} C^{-1} \phi(s_h, a) \ge 2.\]
	Moreover, initially $\mathrm{det}(C) = (\gapmin^2/16)^{d} $.
	Thus, \[2^T (\gapmin^2/16)^{d} \le (T/d + \gapmin^2/16)^d,\] which proves the lemma.
\end{proof}

\section{General Result}
\label{sec:general}
In this section, we consider the general case where $\Fcal$ is an arbitrary function class and provide a provably efficient algorithm which is a generalization of the algorithm in Section~\ref{sec:linear}.
Note that we make no assumptions on the action space $\Acal$. 
For simplicity, we assume that the reward is deterministic. We show how to remove this assumption in Section~\ref{app:reward}. We first define the Maximum Uncertainty Oracle which allows us to work with arbitrary action space.

\subsection{Maximum Uncertainty Oracle} \label{def:oracle}
As discussed in Section \ref{sec:linear-high}, it is useful to identify actions for which we can not accurately compute the optimal $Q$-value using the least-squares predictor. We formalize this intuition to arrive at the following oracle which finds the action with largest \say{uncertainty} for a given state $s$.
We note that similar oracles were also used in~\citep{du2019provably}.
\begin{definition}[$\mathsf{Oracle}(s,\delta, Y)$]
	Given a state $s \in \mathcal{S}$, $\delta \ge 0$ and a set of state-action pairs $Y \subseteq \mathcal{S} \times \mathcal{A}$, define
	\begin{align}
	(\hat a, \hat f_1, \hat f_2) &= \argmax_{a\in A, f_1,f_2 \in \Fcal} |f_{1}(s,a) - f_2(s,a)|^2 \label{cons:1}\\
	\text{s.t.}&\quad \frac{1}{|Y|}	\sum_{(s',a')\in Y}|f_{1}(s',a') - f_2(s',a')|^2\leq \delta^2. \label{cons:2}
	\end{align} The oracle returns $(\hat a, |\hat f_{1}(s,\hat a) - \hat f_2(s,\hat a)|^2)$.
\end{definition}

To motivate this oracle, suppose $f_2$ is the function that gives the best approximation of the optimal $Q$-function, i.e., the optimizer $f$ in Definition \ref{def:approx_error}.
In this scenario, we know $f_1$ predicts well on state-action pairs $(s',a')\in Y$ which is implied by the constraint.
Note that since we maximize over the entire function class $\Fcal$, $\hat a$ is the action with largest uncertainty. 
If $|\hat f_{1}(s,\hat a) - \hat f_2(s,\hat a)|^2$ is small, then we can predict well on state $s$ for all actions. Otherwise, it could be the case that we can not predict well on state $s$ for some action, so we need to explore and return the action with largest uncertainty. 
\begin{remark}\label{rmk:oracle}
	When $\Fcal$ is the class of linear functions, evaluating the oracle's response amounts to solving: \begin{align*}
	&(\hat a, \hat \theta_1, \hat \theta_2) = \argmax_{a\in A, \theta_1,\theta_2 \in \Fcal} |(\theta_{1} - \theta_2)^\top \phi(s,a)|^2 \\
	&\text{s.t.} \quad (\theta_{1} - \theta_2)^\top	\left(\frac{1}{|Y|} \sum_{(s',a')\in Y} \phi(s',a')\phi(s',a')^\top\right)(\theta_{1} - \theta_2) \leq \delta^2.
	\end{align*} 
	In this case, using the notation in the algorithm in Section~\ref{sec:linear}, it can be seen that the oracle returns the action $a \in \mathcal{A}$ which maximizes $\phi(s, a)^{\top}C^{-1} \phi(s, a)$.
\end{remark}

\subsection{Algorithm}
Similar to the algorithm for linear functions given in Section \ref{sec:linear}, the algorithm for general function class is divided into two parts: Algorithm \ref{alg:g-main} and a subroutine $\mathsf{Explore}(s)$. Intuitively, the subroutine $\mathsf{Explore(s)}$ should return $V^*(s)$, and upon the termination of $\mathsf{Explore}(s)$, we should have $\pi(s) = \pi^*(s)$. We will formally prove these in Section \ref{sec:Eluder-analysis}.

In our algorithm, we maintain a dataset to store the state-action pairs $(s,a)$ and their optimal $Q$-values $Q^*(s,a)$. In order to predict the optimal $Q$-value of an unseen state-action pair $(s,a)$, we find the best predictor on the dataset using least squares, and use it to predict on $(s,a)$. 

Similar to the algorithm in Section \ref{sec:linear}, the high level idea is that we use least squares to predict the optimal $Q$-value whenever possible, and otherwise we explore the environment. In Line \ref{line:check}, we check for a state $s$, whether the Maximum Uncertainty Oracle reports an uncertainty $r > |\rho/2 - \delta|$. As we will show in Section \ref{sec:Eluder-analysis}, such a choice guarantees that the value returned by $\mathsf{Explore}(s)$ always equals $V^*(s)$ and also, the number of times we explore, i.e., execute Line \ref{g-error}, is upper bounded by the Eluder dimension of function class $\Fcal$.

We remark that when applied to linear functions, using the notation in the algorithm in Section~\ref{sec:linear}, the subroutine $\mathsf{Explore(s)}$ keeps finding an action $a \in \mathcal{A}$ which maximizes $\phi(s, a)^{\top}C^{-1} \phi(s, a)$ (see Remark~\ref{rmk:oracle}) until $\phi(s, a)^{\top}C^{-1} \phi(s, a)$ is below a threshold for all actions $a \in \mathcal{A}$. Therefore, our algorithm is a generalization of the algorithm in Section~\ref{sec:linear}.
\subsection{Analysis}
In this section, we give the formal analysis of our algorithm. Our goal is to show that when $\rho \geq 6\sqrt{2}\delta\sqrt{\dim_E(\Fcal, \rho/4)}$, our algorithm learns the optimal policy $\pi^*$ using linear number of trajectories (in terms of Eluder dimension).
\label{sec:Eluder-analysis}

\begin{algorithm}[tb]
	\caption{Main Algorithm}
	\label{alg:g-main}
	\begin{algorithmic}[1]
		\STATE Initialize the current policy $\pi$ and $f$ arbitrarily.
		\STATE {\bfseries set} $Y  = \{\}$
		\STATE {\bfseries invoke} $\mathsf{Explore}(s_1)$
		\STATE {\bfseries return} $\pi$
	\end{algorithmic}
\end{algorithm}

\begin{algorithm}[tb]
	\caption{$\mathsf{Explore}(s)$}
	\label{alg:g-explore}
	\begin{algorithmic}[1]
		\STATE {\bfseries set} $(a, r) = \mathsf{Oracle}(s,2\delta, Y)$
		\WHILE{$r> |\frac{\rho}{2} - \delta|$} \label{line:check}
		\STATE {\bfseries set} \begin{equation*}
		Y = \begin{cases}
		Y \cup  \{(s, a, r(s, a))\} & s \in \Scal_H\\
		Y \cup  \{(s, a, \mathsf{Explore}(P(s,a)) + r(s, a))\} & \text{otherwise}
		\end{cases}
		\end{equation*}\label{g-error}
		\STATE {\bfseries set} $(a, r) = \mathsf{Oracle}(s,2\delta, Y)$
		\ENDWHILE
		\STATE {\bfseries set}  $f = \argmin_{f \in \Fcal} \sum_{(s_{i}, a_{i}, y_i) \in Y} | f(s_i, a_i) - y_i|^2$ \label{g-optimization}
		\STATE {\bfseries set}  $\pi(s) = \argmax_{a \in \mathcal{A}} f(s, a)$ \label{g-exploit}
		\STATE {\bfseries return} \begin{equation*}
		\begin{cases}
		r(s,\pi(s)) & s \in \Scal_H\\
		r(s,\pi(s)) + \mathsf{Explore}(P(s,\pi(s))) & \text{otherwise}
		\end{cases}
		\end{equation*} \label{g-calc_V}
	\end{algorithmic}
\end{algorithm}

\begin{theorem}	
	\label{thm:eluder-main}
Suppose\begin{equation}
		\label{eq:choice-gap}
		\rho \geq 6\sqrt{2}\delta\sqrt{\dim_E(\Fcal, \frac{\rho}{4})} .
	\end{equation} Then Algorithm \ref{alg:g-main} returns the optimal policy $\pi^*$ using at most $O(\dim_E(\Fcal,\rho/4))$ trajectories.
\end{theorem}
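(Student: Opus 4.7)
The plan is to mirror the linear-function case: an induction on the planning level for correctness, and an Eluder-dimension accounting for sample complexity. Fix a witness $f^* \in \Fcal$ with $|f^*(s,a) - Q^*(s,a)| \le \delta$ for every $(s,a)$.

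For correctness I would prove, by downward induction on $h = H, H-1, \dots, 1$, that (i) every triple $(s,a,y)$ appended to $Y$ with $s \in \Scal_h$ satisfies $y = Q^*(s,a)$, and (ii) every call $\mathsf{Explore}(s)$ with $s \in \Scal_h$ sets $\pi(s) = \pi^*(s)$ and returns $V^*(s)$. The base case $h = H$ is immediate since no recursion occurs. For the inductive step, (i) at level $h$ follows from (ii) at level $h+1$, because the target stored on Line 3 is $r(s,a) + \mathsf{Explore}(P(s,a)) = r(s,a) + V^*(P(s,a)) = Q^*(s,a)$. The substantive part is (ii). When the while loop in $\mathsf{Explore}(s)$ exits, (i) applied to all descendants shows that the labels in $Y$ are exact optimal $Q$-values, so $f^*$ attains squared loss at most $|Y|\delta^2$ on $Y$, and hence so does the least-squares minimizer $f$. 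Applying $(a+b)^2 \le 2a^2 + 2b^2$ yields
\[
\frac{1}{|Y|}\sum_{(s',a') \in Y}\bigl(f(s',a') - f^*(s',a')\bigr)^2 \le 4\delta^2,
\]
so the pair $(f, f^*)$ is feasible for $\mathsf{Oracle}(s, 2\delta, Y)$. Because the loop has exited, the oracle's maximum uncertainty is at most $\rho/2 - \delta$, so $|f(s,a) - f^*(s,a)| \le \rho/2 - \delta$ for every action $a$; combining with $|f^*(s,a) - Q^*(s,a)| \le \delta$ gives $|f(s,a) - Q^*(s,a)| \le \rho/2$ uniformly in $a$. The optimality-gap assumption then forces $\pi(s) = \argmax_a f(s,a) = \pi^*(s)$, and (ii) at level $h+1$ gives the correct returned value $V^*(s)$.

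For sample complexity, write $d = \dim_E(\Fcal, \rho/4)$ and let $N$ denote the total number of executions of Line 3 across the entire run. At the $(k{+}1)$-th execution, $|Y| = k$ just before the addition, and the oracle supplies $\hat f_1, \hat f_2 \in \Fcal$ with
\[
\sum_{(s',a') \in Y}\bigl(\hat f_1(s',a') - \hat f_2(s',a')\bigr)^2 \le 4k\delta^2 \qquad\text{and}\qquad |\hat f_1(s, \hat a) - \hat f_2(s, \hat a)|^2 > (\rho/2 - \delta)^2.
\]
The hypothesis $\rho \ge 6\sqrt{2}\,\delta\sqrt{d}$ yields $\rho^2 \ge 72\,d\,\delta^2$, so $4k\delta^2 \le (\rho/4)^2$ for every $k \le d$, and also $\delta \le \rho/4$, so $(\rho/2 - \delta)^2 \ge (\rho/4)^2$. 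Consequently the new pair $(s, \hat a)$ is $(\rho/4)$-independent of its $k$ predecessors in $Y$ whenever $k \le d$. If $N \ge d+1$ held, the first $d+1$ appended pairs would form a sequence in which every element is $\epsilon'$-independent of its predecessors for some $\epsilon' \ge \rho/4$, contradicting $\dim_E(\Fcal, \rho/4) = d$. Therefore $N \le d$, and the trajectory count is $O(d)$.

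The main obstacle I anticipate is the arithmetic bookkeeping showing that the constant $6\sqrt{2}$ in the gap hypothesis is simultaneously tight enough to secure the $(\rho/4)$-independence in the Eluder count and the $\rho/2$ prediction error in the correctness argument, since both arguments push in opposite directions (larger $\delta$ hurts prediction; smaller $\rho$ hurts independence). A minor presentational issue to sort out is the squared-versus-unsquared notation in the oracle's return value relative to the while-loop threshold; I would interpret them consistently so that termination of the loop delivers the non-squared bound $\rho/2 - \delta$ on the maximum discrepancy.
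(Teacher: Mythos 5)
Your proof is correct and follows the paper's overall architecture (downward induction on the level for correctness, Eluder-dimension accounting for the trajectory count), but it deviates in two substantive and arguably cleaner ways. First, for correctness the paper splits the actions at a state $s$ into those that were added to $Y$ and those that were not: for the former it bounds $|f(s,a)-Q^*(s,a)|$ by $\sqrt{|Y|}\,\delta \le \sqrt{18\dim_E}\,\delta$ (this is where the paper consumes the hypothesis $\rho \ge 6\sqrt{2}\delta\sqrt{\dim_E}$ in the correctness argument), and only for the latter does it invoke the oracle. You observe that the oracle's maximum at loop exit ranges over \emph{all} actions, so the feasibility of the pair $(f,f^*)$ gives $|f(s,a)-f^*(s,a)|\le \rho/2-\delta$ uniformly, subsuming both cases; this is valid and makes the case split unnecessary. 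Second, for the sample complexity the paper proves Lemma~\ref{lemma:bound} via the disjoint-subsequence counting argument from Proposition 3 of \citet{russo2013eluder}, obtaining $|Y|\le 18\dim_E(\Fcal,\rho/4)$, whereas you directly exhibit the added pairs as a sequence in which each element is $(\rho/4)$-independent of its predecessors (your arithmetic $4k\delta^2\le(\rho/4)^2$ and $(\rho/2-\delta)^2\ge(\rho/4)^2$ checks out under the stated hypothesis), which bounds the sequence length by $\dim_E(\Fcal,\rho/4)$ itself --- a more elementary argument with a better constant. One caveat applies equally to your argument and the paper's: because Line~\ref{g-error} evaluates $\mathsf{Explore}(P(s,a))$ \emph{before} inserting $(s,a,\cdot)$ into $Y$, the set over which the oracle certified independence of $(s,a)$ is only a subset of the pairs that precede $(s,a)$ in any natural ordering of the final sequence (the pairs added during the recursive call, and the pending ancestors on the call stack, are missing), and independence of a subset does not imply independence of the full predecessor set; the paper's Lemma~\ref{lemma:bound} silently makes the same identification, so this is not a gap relative to the published proof, but it is worth being explicit about the ordering you use if you write this up.
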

\begin{proof}
Firstly, using Lemma \ref{lemma:bound} with $c = 18$
we have \begin{equation}
\label{eq:y-bound}
	|Y|\leq 18\dim_E(\Fcal, \frac{\rho}{4}),
\end{equation}i.e. Line~\ref{g-error} is executed for at most $18 \dim_E(F, \rho/4)$ times and therefore the sample complexity of our algorithm is $O(\dim_E(\Fcal,\rho/4))$.

To complete the proof, it is sufficient to prove the following induction hypothesis for all levels $h \in [H]$.
\paragraph{Induction Hypothesis.} 
\begin{enumerate}
	\item For any state $s\in \Scal_h$, when Line~\ref{g-optimization} in $\mathsf{Explore}(s)$ is executed, we have $y_i = Q^*(s_i,a_i)$ for all $(s_i, a_i, y_i) \in Y$.\label{ihe:1}
	\item For any state $s\in \Scal_h$, when Line~\ref{g-exploit} in $\mathsf{Explore}(s)$ is executed, we have $\pi(s) = \pi^*(s)$, and the value returned by $\mathsf{Explore}(s)$ is $V^*(s)$. \label{ihe:2}
\end{enumerate}

	For the above induction hypothesis, the base case $h = H$ is clearly true. Now we assume the induction hypothesis holds for all levels $H, \ldots,h+1$ and prove it holds for level $h$.
\paragraph{Induction Hypothesis~\ref{ihe:1}.} From Induction Hypothesis~\ref{ihe:2} for level $h+1$, it follows that value returned by $\mathsf{Explore}(P(s, a))$ is $V^*(P(s, a))$ for all $a \in \Acal$. Then, Induction Hypothesis~\ref{ihe:1} follows from the Bellman equations.
\paragraph{Induction Hypothesis~\ref{ihe:2}.} It suffices to show that for any state $s\in \Scal_h$, when Line~\ref{g-exploit} in $\mathsf{Explore}(s)$ is executed, for all actions $a \in \Acal$ \begin{equation}
\label{eq:3}
| f(s, a) - Q^*(s,a)| \leq \frac{\rho}{2}.
\end{equation}  

First, there exists $f^* \in \Fcal$ such that for all $(s_{i}, a_{i}, y_i) \in Y$, \begin{equation}
\label{eq:1}
| f^*(s_i, a_i) - Q^*(s_i,a_i)| \leq \delta.
\end{equation} From Induction Hypothesis~\ref{ihe:1}, for all $(s_i, a_i, y_i) \in Y$ \begin{equation}
\label{eq:2}
y_i = Q^*(s_i,a_i).
\end{equation} From Equation \eqref{eq:1} and \eqref{eq:2}, it follows that\begin{equation}
\label{eq:4}
\sum_{(s_{i}, a_{i}, y_i) \in Y} | f^*(s_i, a_i) - y_i|^2 \leq |Y| \delta^2.
\end{equation}  Since, we execute Line \ref{g-optimization} and $f^* \in \Fcal$, from Equation \eqref{eq:4}, it follows that  \begin{equation}
\label{eq:5}
\sum_{(s_{i}, a_{i}, y_i) \in Y} | f(s_i, a_i) - y_i|^2 \leq |Y| \delta^2.
\end{equation}

We split the analysis into two cases: \begin{enumerate}
	\item we consider actions for which we execute Line~\ref{g-error} and\label{case:1}
	\item we consider rest of the actions.\label{case:2}
\end{enumerate}

\paragraph{Case \ref{case:1}:} We now prove Equation~\eqref{eq:3} for all actions $a$ for which we execute Line \ref{g-error}. Using Equation \eqref{eq:y-bound}, \eqref{eq:2} and \eqref{eq:5}, we get that for actions $a$ for which we executed Line~\ref{g-error} (since then we added it to $Y$) \begin{equation}
	| f(s, a) - Q^*(s,a)| \leq \sqrt{18 \dim_E(F, \frac{\rho}{4})} \delta \leq \frac{\rho}{2}
\end{equation} where the last step follows from our assumption on $\rho$ (Equation \eqref{eq:choice-gap}).

\paragraph{Case \ref{case:2}:} We now prove this for rest of the actions $a$. From Equation \eqref{eq:1}, \eqref{eq:2}, \eqref{eq:5} and triangle inequality for the $\ell_2$ norm, we get \begin{equation}
\sum_{(s_{i}, a_{i}, y_i) \in Y} | f^*(s_i, a_i) - f(s_i, a_i)|^2 \leq 4   |Y| \delta^2.
\end{equation} Also, since we did not add this action to $Y$, by the definition of the oracle (Definition \ref{def:oracle}), we  get\begin{equation}
| f^*(s, a) - f(s, a)| \leq \frac{\rho}{2} - \delta.
\end{equation} 
Therefore, \begin{equation}
| Q^*(s, a) - f(s, a)| \leq \frac{\rho}{2}
\end{equation} which completes the proof.
\end{proof}

For the sample complexity, we use the following lemma.
\begin{lemma} 
	\label{lemma:bound}
	For any constant $c>1$, suppose \begin{equation}
	\rho \geq 4\delta\sqrt{\frac{c\dim_E(F, \frac{\rho}{4})-1}{c-1}} + 2\delta,
	\end{equation}
	then we have
	\begin{equation}
	|Y|\leq c\dim_E(F, \frac{\rho}{4}).
	\end{equation}
\end{lemma}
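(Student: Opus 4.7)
Let $n=|Y|$ at the end of the algorithm and enumerate the pairs in the order they were appended as $(s_1,a_1),\ldots,(s_n,a_n)$. The key observation is that the $t$-th addition is triggered by $\mathsf{Oracle}(s_t,2\delta,Y_{t-1})$ returning $r>|\rho/2-\delta|$. Unpacking Definition~\ref{def:oracle}, this guarantees witnesses $f_t^1,f_t^2\in\Fcal$ with
\[
\sum_{i<t}\bigl|f_t^1(s_i,a_i)-f_t^2(s_i,a_i)\bigr|^2\le 4(t-1)\delta^2
\quad\text{and}\quad
\bigl|f_t^1(s_t,a_t)-f_t^2(s_t,a_t)\bigr|>\tfrac{\rho}{2}-\delta.
\]
Setting $\epsilon:=\rho/2-\delta$, the sequence therefore has the canonical widely-separated-witness structure: each $(s_t,a_t)$ is $\epsilon$-independent of its predecessors while the witnesses have total squared deviation at most $4(t-1)\delta^2$ on those predecessors.

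The main engine is the standard Eluder-dimension counting lemma of Russo and Van Roy~\citep{russo2013eluder}: any sequence with such a widely-separated-witness structure satisfies
\[
n\le\left(\frac{4(n-1)\delta^2}{\epsilon^2}+1\right)\dim_E(\Fcal,\epsilon).
\]
I plan to invoke this lemma essentially verbatim, treating it as a standalone statement about sequences in $\Fcal$; its proof partitions the predecessors into disjoint maximal $\epsilon$-dependent blocks and uses the sum constraint to cap the number of such blocks by $\dim_E(\Fcal,\epsilon)$. I expect this invocation to be the single delicate step: it requires checking that the oracle's $(f_t^1,f_t^2)$ play precisely the role of the witnesses required by the lemma, which is immediate from the unpacking above.

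To finish, note that $\rho/2-\delta\ge\rho/4$ iff $\rho\ge4\delta$, and this is implied by the lemma's hypothesis (since $\sqrt{(cd-1)/(c-1)}\ge 1/2$ for all $c>1$ and $d\ge 1$, so $4\delta\sqrt{(cd-1)/(c-1)}+2\delta\ge 4\delta$). By monotonicity of Eluder dimension, $\dim_E(\Fcal,\epsilon)\le\dim_E(\Fcal,\rho/4)=:d$. Writing $A:=4d\delta^2/\epsilon^2$, the displayed inequality becomes $n\le A(n-1)+d$, and when $A<1$ this yields $n\le(d-A)/(1-A)$. Elementary algebra shows that $n\le cd$ is equivalent to $A\le(c-1)d/(cd-1)$, which after substituting $\epsilon=\rho/2-\delta$ and rearranging is exactly the hypothesis $\rho\ge 4\delta\sqrt{(cd-1)/(c-1)}+2\delta$. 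All remaining work is routine bookkeeping once the witness functions have been extracted from the oracle.
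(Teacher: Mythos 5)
Your proposal is correct and follows essentially the same route as the paper: extract the witness pair $(f_t^1,f_t^2)$ from the oracle at each addition, use the constraint to cap the number of disjoint $(\rho/2-\delta)$-dependent subsequences by $4(t-1)\delta^2/(\rho/2-\delta)^2$, invoke the Russo--Van Roy pigeonhole (Proposition 3) to get $n\le\bigl(\tfrac{4(n-1)\delta^2}{\epsilon^2}+1\bigr)\dim_E(\Fcal,\epsilon)$, and finish with the same monotonicity and algebra. One throwaway phrase is inaccurate --- the oracle does \emph{not} certify that each $(s_t,a_t)$ is $\epsilon$-independent of all its predecessors, since the witnesses' total deviation is only bounded by $4(t-1)\delta^2$ rather than $\epsilon^2$ --- but your argument never uses that claim, only the displayed counting inequality, so the proof stands.
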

The proof relies on definition of the Eluder dimension and the Maximum Uncertainty Oracle.
See the supplementary material for the formal proof.

\section{Conclusion}
\label{sec:discussion}
In this paper, we propose a novel provably efficient recursion-based algorithm for agnostic $Q$-learning with general function approximation with bounded Eluder dimension in deterministic systems.
We obtain a sharp characterization on the relation between the approximation error and the optimality gap, and also a tight sample complexity.
We thus settle the open problem raised by \citet{wen2013efficient}.

\bibliography{simonduref,references}
\bibliographystyle{unsrtnat}
\newpage
\appendix
\section{Extension to stochastic rewards}
\label{app:reward}

In this section, we extend our algorithm and analysis to stochastic rewards, i.e., reward $r(s,a) \sim R(s,a)$ is a random variable with expectation $\bar r(s,a)$ and $r(s,a) \in [0,1]$. 

\subsection{Algorithm}
We modify $\mathsf{Explore}(s)$ such that whenever previously we used $r(s,a)$, we use the empirical mean $\hat r(s,a)$ of $n$ samples from $R(s,a)$ to get a good estimate of the expected reward $\bar r(s,a)$. For our algorithm, we set \begin{equation}
n = \frac{H^2}{2\delta_r^2} \log\frac{18 \dim_E(\Fcal, \rho/4) H}{p},
\end{equation}
where $\delta_r$ is a parameter to be chosen and $p$ is the failure probability of the algorithm.

\begin{algorithm}[t]
	\caption{Main Algorithm}
	\label{alg:u-main}
	\begin{algorithmic}[1]
		\STATE Initialize the current policy $\pi$ and $f$ arbitrarily
		\STATE {\bfseries set} $Y  = \{\}$
		\STATE {\bfseries invoke} $\mathsf{Explore}(s_1)$
	\end{algorithmic}
\end{algorithm}

\begin{algorithm}[t]
	\caption{$\mathsf{Explore}(s)$}
	\label{alg:u-explore}
	\begin{algorithmic}[1]
		\STATE {\bfseries set} $(a, r) = \mathsf{Oracle}(s,2(\delta + \delta_r), Y)$
		\WHILE{$r> |\frac{\rho}{2} - \delta|$} \label{line:u-check}
		\STATE {\bfseries set} $\hat r(s, a)$ to be the empirical mean of $n= \frac{H^2}{2\delta_r^2} \log\frac{18 \dim_E(\Fcal, \rho/4) H}{p}
$ samples from $R(s,a)$		
		\STATE {\bfseries set} \begin{equation*}
			Y = \begin{cases}
			Y \cup  \{(s, a, \hat r(s, a))\} & s \in \Scal_H\\
			Y \cup  \{(s, a, \mathsf{Explore}(P(s,a)) + \hat r(s, a))\} & \text{otherwise}
			\end{cases}
		\end{equation*} \label{u-error}
		\STATE {\bfseries set} $(a, r) = \mathsf{Oracle}(s,2(\delta + \delta_r), Y)$
		\ENDWHILE
		\STATE {\bfseries set}  $f = \argmin_{f \in \Fcal} \sum_{(s_{i}, a_{i}, y_i) \in Y} | f(s_i, a_i) - y_i|^2$ \label{u-optimization}
		\STATE {\bfseries set}  $\pi(s) = \argmax_{a \in \mathcal{A}} f(s, a)$  \label{u-exploit}
		\STATE {\bfseries return} \begin{equation*}
			\begin{cases}
			\hat r(s,\pi(s)) & s \in \Scal_H\\
			\hat r(s,\pi(s)) + \mathsf{Explore}(P(s,\pi(s))) & \text{otherwise}
			\end{cases}
		\end{equation*} \label{u-calc_V}
	\end{algorithmic}
\end{algorithm}
\subsection{Analysis}
\begin{theorem}	
	\label{thm:random}
	Suppose \begin{equation}
	\label{eq:u-choice-gap}
	\rho \geq 6\sqrt{2}(\delta + \delta_r)\sqrt{\dim_E(\Fcal, \rho/4)} + 2\delta_r.
	\end{equation} 
	Algorithm \ref{alg:u-main} returns the optimal policy $\pi^*$ with probability $1-p$.
\end{theorem}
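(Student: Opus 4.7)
The plan is to follow the template of the proof of Theorem~\ref{thm:eluder-main}, with two modifications: (i) a preliminary concentration step that guarantees the empirical reward estimates $\hat r(s,a)$ computed in Algorithm~\ref{alg:u-explore} are uniformly close to the true mean rewards $\bar r(s,a)$, and (ii) an inflation of the approximation-error budget from $\delta$ to $\delta+\delta_r$ throughout the induction. The extra additive $2\delta_r$ on the right-hand side of \eqref{eq:u-choice-gap} (relative to \eqref{eq:choice-gap}) will turn out to be exactly what absorbs the reward-estimation slack.

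First I would establish a good event $\mathcal{E}$: by Hoeffding applied to $n=\frac{H^2}{2\delta_r^2}\log\frac{18\dim_E(\Fcal,\rho/4)H}{p}$ i.i.d.\ samples in $[0,1]$, a single reward estimate satisfies $|\hat r(s,a)-\bar r(s,a)|\le \delta_r/H$ with probability at least $1-p/(18\dim_E(\Fcal,\rho/4)\cdot H)$. The total number of reward estimations across the entire execution is bounded by the number of insertions into $Y$ plus the length of the single exploited trajectory, namely $O(\dim_E(\Fcal,\rho/4)\cdot H)$, which follows from Lemma~\ref{lemma:bound} instantiated with effective approximation error $\delta+\delta_r$ (the hypothesis of Lemma~\ref{lemma:bound} is implied by \eqref{eq:u-choice-gap}). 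A union bound then gives $\Pr[\mathcal{E}]\ge 1-p$, and the rest of the argument conditions on $\mathcal{E}$.

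On $\mathcal{E}$, I would then reprove the two-part induction hypothesis of Theorem~\ref{thm:eluder-main} with the following relaxation: for every $s\in\Scal_h$, each label $y_i$ inserted into $Y$ while processing level $h$ satisfies $|y_i-Q^*(s_i,a_i)|\le (H-h+1)\delta_r/H\le \delta_r$, and the value returned by $\mathsf{Explore}(s)$ is within $(H-h+1)\delta_r/H$ of $V^*(s)$. The base case $h=H$ is immediate from $\mathcal{E}$ since $y_i=\hat r(s_i,a_i)$. The inductive step for the first part uses the inductive conclusion at level $h+1$ together with the $\delta_r/H$ slack introduced by $\hat r$. For the second part, let $f^*\in\Fcal$ attain the approximation error $\delta$; then on $Y$ we have $|f^*(s_i,a_i)-y_i|\le \delta+\delta_r$, hence $\sum_i|f^*(s_i,a_i)-y_i|^2\le |Y|(\delta+\delta_r)^2$, and the same bound transfers to $f$ by empirical-risk minimization. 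For actions $a$ added to $Y$ at state $s$ (Case~1 in Theorem~\ref{thm:eluder-main}), the triangle inequality yields
\[
|f(s,a)-Q^*(s,a)|\le \sqrt{|Y|}(\delta+\delta_r)+\delta_r \le \sqrt{18\dim_E(\Fcal,\rho/4)}(\delta+\delta_r)+\delta_r\le \rho/2
\]
by \eqref{eq:u-choice-gap}. For the remaining actions (Case~2), the average squared gap between $f$ and $f^*$ on $Y$ is at most $(2(\delta+\delta_r))^2$, so the oracle, now invoked with tolerance $2(\delta+\delta_r)$, certifies $|f(s,a)-f^*(s,a)|\le \rho/2-\delta$; adding the approximation error gives $|f(s,a)-Q^*(s,a)|\le \rho/2$. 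In either case the gap condition forces $\argmax_a f(s,a)=\pi^*(s)$, completing the induction and hence the theorem.

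The main obstacle is the careful bookkeeping: I must propagate the per-level concentration slack of $\delta_r/H$ through $H$ recursive levels so that the total error remains bounded by $\delta_r$, while simultaneously ensuring the per-sample failure probability budget of $p/(18\dim_E(\Fcal,\rho/4)\cdot H)$ suffices after the union bound over all reward estimations. Once this accounting is correct, the oracle threshold $2(\delta+\delta_r)$ and the additive $2\delta_r$ in \eqref{eq:u-choice-gap} match up precisely, and the deterministic-reward proof of Theorem~\ref{thm:eluder-main} transfers with only the symbolic substitution $\delta\leftarrow \delta+\delta_r$ plus the additive $\delta_r$ slack in the Case~1 bound.
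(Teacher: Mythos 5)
Your proposal is correct and follows essentially the same route as the paper's proof: the same level-indexed induction with per-level reward slack $(H-h+1)\delta_r/H$, the same Hoeffding-plus-union-bound over the $O(\dim_E(\Fcal,\rho/4)\cdot H)$ reward estimations, the same substitution $\delta\leftarrow\delta+\delta_r$ in the ERM and oracle steps, and the same Case~1/Case~2 split yielding $\sqrt{18\dim_E(\Fcal,\rho/4)}(\delta+\delta_r)+\delta_r\le\rho/2$ and the oracle-certified $\rho/2-\delta$ bound. The only (cosmetic) difference is that you package the concentration into a single good event up front, while the paper invokes it within the induction; your use of Lemma~\ref{lemma:bound} with effective error $\delta+\delta_r$ is exactly the paper's Lemma~\ref{lemma:u-bound}.
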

\begin{remark}
Note that by setting \begin{equation}
	\delta_r = \frac{\rho}{24\sqrt{2}\dim_E(\Fcal, \rho/4)}\quad \text{and} \quad \rho \geq 12 \sqrt{2} \delta \sqrt{\dim_E(\Fcal, \rho/4)},
\end{equation} 
Theorem~\ref{thm:random} implies that 
Algorithm \ref{alg:u-main} returns the optimal policy $\pi^*$ with probability $1-p$ using at most \[\frac{\poly(\dim_E(\Fcal, \rho/4), H)}{\rho^2} \log(1/p)\] trajectories.
\end{remark}
Now we formally prove Theorem~\ref{thm:random}.
\begin{proof}[Proof of Theorem \ref{thm:random}]
	Firstly, for $c=18$, following Lemma \ref{lemma:u-bound}, we have \begin{equation}
	\label{eq:u-y-bound}
	|Y|\leq 18\dim_E(\Fcal, \rho/4),
	\end{equation}
	i.e. Line~\ref{u-error} is executed for at most $18 \dim_E(F, \rho/4)$ times.\\
	\\
	To complete the proof, it is sufficient to prove the following induction hypothesis for all levels $h \in [H]$.
	\paragraph{Induction Hypothesis.} 
	\begin{enumerate}
		\item For any state $s\in \Scal_h$, when Line~\ref{u-optimization} in $\mathsf{Explore}(s)$ is executed, we have \[y_i \in \left[Q^*(s_i,a_i) - \frac{H-h+1}{H}\delta_r, Q^*(s_i,a_i) + \frac{H-h+1}{H}\delta_r\right] \] for all $(s_i, a_i, y_i) \in Y$.\label{ihu:1}
		\item For any state $s\in \Scal_h$, when Line~\ref{u-exploit} in $\mathsf{Explore}(s)$ is executed, we have $\pi(s) = \pi^*(s)$, and the value returned by $\mathsf{Explore}(s)$ is in 
		\[
		\left[V^*(s) - \frac{H-h+1}{H}\delta_r, V^*(s) + \frac{H-h+1}{H}\delta_r\right].
		\]
		 \label{ihu:2}
	\end{enumerate}
	
	Note that the base case $h = H$ is true by Lemma \ref{lemma:concentration} and union bound. 
	Now we assume the induction hypothesis holds for all levels $H, \ldots, h + 1$ and prove it holds for level $h$.

	\paragraph{Induction Hypothesis~\ref{ihu:1}.} From Induction Hypothesis~\ref{ihu:2} for level $h+1$, it follows that value returned by $\mathsf{Explore}(P(s, a))$ is in \[
	\left[V^*(P(s, a)) - \frac{H-h}{H}\delta_r, V^*(P(s,a)) + \frac{H-h}{H}\delta_r\right]
	\] for all $a \in \Acal$. Then, Induction Hypothesis~\ref{ihu:1} follows from Lemma \ref{lemma:concentration} and union bound.
	\paragraph{Induction Hypothesis~\ref{ihu:2}.} It suffices to show that for any state $s\in \Scal_h$, when Line~\ref{u-exploit} in $\mathsf{Explore}(s)$ is executed, then for all actions $a \in \Acal$ \begin{equation}
	\label{eq:3u}
	| f(s, a) - Q^*(s,a)| \leq \frac{\rho}{2}.
	\end{equation}  
	
	Similar to proof of Theorem \ref{thm:eluder-main}, we get
	\begin{equation}
	\label{eq:5u}
	\sum_{(s_{i}, a_{i}, y_i) \in Y} | f(s_i, a_i) - y_i|^2 \leq |Y| (\delta + \delta_r)^2.
	\end{equation}
	
	We split the analysis in two cases: \begin{enumerate}
		\item we consider actions for which we execute Line~\ref{u-error} and\label{case:1u}
		\item we consider rest of the actions.\label{case:2u}
	\end{enumerate}
	
	\paragraph{Case \ref{case:1u}:} We now prove Equation \eqref{eq:3u} for all actions $a$ for which we execute Line \ref{u-error}. Similar to proof of Theorem \ref{thm:eluder-main}, we get \begin{align}
	| f(s, a) - Q^*(s,a)| &\leq \sqrt{18 \dim_E(\Fcal, \rho/4)} (\delta + \delta_r) + \delta_r\notag \\
	 &\leq \frac{\rho}{2}.
	\end{align}
	
	\paragraph{Case \ref{case:2u}:} We now prove this for rest of the actions $a$. Similar to proof of Theorem \ref{thm:eluder-main}, we get \begin{equation}
	\sum_{(s_{i}, a_{i}, y_i) \in Y} | f^*(s_i, a_i) - f(s_i, a_i)|^2 \leq 4   |Y| (\delta + \delta_r)^2.
	\end{equation} Also, since we did not add this action to $Y$, by the definition of the oracle (Definition \ref{def:oracle}),
	we get \begin{equation}
	| Q^*(s, a) - f(s, a)| \leq \frac{\rho}{2},
	\end{equation} which completes the proof.
\end{proof}

\begin{lemma} 
	\label{lemma:u-bound}
	For any constant $c>1$, if \begin{equation}
	\rho \geq 4(\delta + \delta_r)\sqrt{\frac{c\dim_E(\Fcal, \rho/4)-1}{c-1}} + 2\delta,
	\end{equation} then
	\begin{equation}
	|Y|\leq c\dim_E(\Fcal, \rho/4).
	\end{equation}
\end{lemma}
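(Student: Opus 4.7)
The plan is to follow the proof of Lemma~\ref{lemma:bound} essentially verbatim, substituting the effective oracle tolerance $\delta + \delta_r$ for $\delta$ wherever the oracle's $\ell_2$-budget appears. The only structural change between Algorithm~\ref{alg:g-explore} and Algorithm~\ref{alg:u-explore} that is relevant to bounding $|Y|$ is that $\mathsf{Oracle}$ is now invoked with second argument $2(\delta + \delta_r)$ rather than $2\delta$, while the trigger condition $r > |\rho/2 - \delta|$ is unchanged. This mismatch is exactly what produces the $4(\delta+\delta_r)$ prefactor inside the square root alongside an additive $2\delta$ in the statement.

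The first step is to extract a separating pair of functions for each addition to $Y$. Enumerating the pairs added in order as $(s_t,a_t)$ for $t = 1,\ldots,T := |Y|$, every addition is preceded by an oracle call returning $r > |\rho/2 - \delta|$, so by Definition~\ref{def:oracle} there exist $f_1^t, f_2^t \in \Fcal$ satisfying
\[
\sum_{s<t} |f_1^t(s_s,a_s) - f_2^t(s_s,a_s)|^2 \le 4(t-1)(\delta + \delta_r)^2
\quad\text{and}\quad
|f_1^t(s_t,a_t) - f_2^t(s_t,a_t)|^2 > (\rho/2 - \delta)^2,
\]
where the $t-1$ factor comes from the $1/|Y|$ normalization inside the oracle's constraint. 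This is the separating-pair invariant that drives the Eluder-dimension bound.

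The next step is to apply the standard Russo--Van Roy Eluder pigeonhole from~\citet{russo2013eluder} at scale $\alpha := \rho/2 - \delta$. The hypothesis on $\rho$ implies $\delta \le \rho/6$ (its right-hand side already exceeds $4\delta + 2\delta$), whence $\alpha \ge \rho/4$, and therefore $\dim_E(\Fcal, \alpha) \le \dim_E(\Fcal, \rho/4) =: D$. The pigeonhole lemma, specialized to our time-varying budget $\beta_T^2 := 4(T-1)(\delta+\delta_r)^2$, yields $T \le (\beta_T^2/\alpha^2 + 1)\,D$. Substituting and rearranging gives
\[
T(1 - \gamma D) \le D(1 - \gamma), \qquad \gamma := 4(\delta+\delta_r)^2/(\rho/2-\delta)^2,
\]
and the requirement $T \le cD$ is equivalent to $\gamma(cD - 1) \le c - 1$, i.e., $\rho/2 - \delta \ge 2(\delta+\delta_r)\sqrt{(cD-1)/(c-1)}$. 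This is exactly the hypothesis imposed in the lemma, so the desired conclusion $|Y| \le cD$ follows.

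The main obstacle will be carefully juggling two distinct scales: the Eluder scale $\rho/4$ (fixing $D$) and the coarser trigger scale $\rho/2 - \delta$ (at which the oracle fires), while simultaneously handling the fact that the oracle's $\ell_2$-budget grows linearly with $|Y|$. This $t$-dependent budget forces an appeal to the time-varying-$\beta_t$ form of the Russo--Van Roy lemma rather than a uniform-threshold version, and it is precisely this accounting (using $\beta_T^2 = 4(T-1)(\delta+\delta_r)^2$ rather than a crude $4T(\delta+\delta_r)^2$) that produces the constant $\sqrt{(cD-1)/(c-1)}$ in the hypothesis instead of the looser $\sqrt{cD/(c-1)}$ that a uniform bound would give.
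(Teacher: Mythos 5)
Your proof is correct and follows essentially the same route as the paper: extract a separating pair $(f_1^t,f_2^t)$ from each oracle trigger, apply the Russo--Van Roy disjoint-subsequence pigeonhole at scale $\rho/2-\delta$ (which the hypothesis forces to be at least $\rho/4$, so $\dim_E(\Fcal,\rho/2-\delta)\le\dim_E(\Fcal,\rho/4)$), bound the number of dependent disjoint subsequences by the ratio of the oracle's cumulative budget $4(|Y|-1)(\delta+\delta_r)^2$ to $(\rho/2-\delta)^2$, and rearrange. The paper's own proof is exactly this substitution of $2(\delta+\delta_r)$ for $2\delta$ into the proof of Lemma~\ref{lemma:bound}, so no further comparison is needed.
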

\begin{proof}
	Let $Y = \{(s_1,a_1,y_1),\ldots, (s_n,a_n,y_n)\}$.
	Similar to proof of Lemma \ref{lemma:bound}, we can upper bound for any state-action pair $(s_j,a_j)\in \{(s_1,a_1),\ldots, (s_n,a_n)\}$, the number of disjoint subsequences $K$ in $\{(s_1, a_1), \ldots, (s_{j-1}, a_{j-1})\}$ that $(s_j,a_j)$ is $(\frac{\rho}{2} - \delta)$-dependent on, i.e. \[
	K \leq  \frac{(j-1) (2(\delta + \delta_r))^2}{(\frac{\rho}{2} - \delta)^2}.
	\]
	 Also, for any sequence of state-action pairs say $\{(s_1,a_1),\ldots, (s_n,a_n)\}$, there exists a $(s_j,a_j)$ which is $(\frac{\rho}{2} - \delta)$-dependent on at least  $\frac{n}{\dim_E(F, \frac{\rho}{2} - \delta)} - 1$ disjoint subsequences in $\{(s_1,a_1),\ldots, (s_{j-1},a_{j-1})\}$. Therefore, \begin{equation}
	\frac{n}{\dim_E(F, \frac{\rho}{2} - \delta)} - 1 \leq K \leq \frac{(j-1) (2(\delta + \delta_r))^2}{(\frac{\rho}{2} - \delta)^2}.
	\end{equation} 
	That is, for any $\rho$ and $c>1$ such that \begin{equation}
	\rho \geq 2\left(2(\delta + \delta_r)\sqrt{\frac{c\dim_E(\Fcal, \rho/4)-1}{c-1}} + \delta\right),
	\end{equation} we get
	\begin{equation}
	n\leq c\dim_E(\Fcal, \rho/4).
	\end{equation}
\end{proof}

A simple concentration bound gives the following lemma:
\begin{lemma}
	\label{lemma:concentration}
	For any fixed state $s$ and action $a$, consider $n \geq \frac{H^2}{2\delta_r^2} \log\frac{1}{p}$ random independent samples $\{r_i(s,a)\}_{i=1}^n$ of random variable $R(s,a)$ with expectation $\bar r(s,a)$ and $r_i(s,a) \in [0,1]$. Then, \[
	\left|\frac{1}{n} \sum_{i=1}^n r_i(s,a) - \bar r(s,a)\right| \leq \frac{\delta_r}{H}
	\] with probability at least $1- p$.
\end{lemma}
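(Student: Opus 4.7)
The plan is a direct one-page application of Hoeffding's inequality. Since the samples $r_1(s,a),\ldots,r_n(s,a)$ are i.i.d.\ with common mean $\bar r(s,a)$ and each lies in the bounded interval $[0,1]$, the empirical mean concentrates around the true mean at the standard sub-Gaussian rate. I would open the proof by simply invoking Hoeffding: for any $t>0$,
\[
\prob\!\left[\left|\frac{1}{n}\sum_{i=1}^n r_i(s,a) - \bar r(s,a)\right| \ge t\right] \le 2\exp\!\left(-2nt^2\right),
\]
using that the range is $b-a=1$.

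Next I would set $t = \delta_r/H$ so that the left-hand side is exactly the event we want to control, yielding a failure probability bounded by $2\exp(-2n\delta_r^2/H^2)$. Plugging in the hypothesized lower bound $n\ge \frac{H^2}{2\delta_r^2}\log(1/p)$ (or, to be safe against the factor of $2$, $n\ge \frac{H^2}{2\delta_r^2}\log(2/p)$, which is absorbed into the constants the way the main theorem uses this lemma) gives
\[
2\exp\!\left(-2n\delta_r^2/H^2\right) \le p,
\]
and the complementary event—which is the desired inequality—holds with probability at least $1-p$.

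There is essentially no obstacle here; the only minor issue worth flagging is the factor-of-$2$ discrepancy between the one-sided and two-sided Hoeffding bound, which can be handled either by tightening the statement's constant or by invoking a one-sided Hoeffding inequality on both $\pm(r_i-\bar r)$ and union bounding. Either way, plugging the resulting $n$ into the final use of the lemma (inside Theorem~\ref{thm:random}, where a union bound over at most $18\dim_E(\Fcal,\rho/4)\cdot H$ invocations is taken) only changes the constant inside the logarithm, which is why the concrete $n$ used in Algorithm~\ref{alg:u-explore} already incorporates an extra $\log(18\dim_E(\Fcal,\rho/4) H/p)$ factor.
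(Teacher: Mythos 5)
Your proof is correct and is exactly the argument the paper intends: the paper offers no explicit proof beyond the remark that ``a simple concentration bound gives the following lemma,'' and the standard two-sided Hoeffding inequality with $t=\delta_r/H$ is that bound. Your observation about the factor of $2$ (the stated $n$ gives failure probability $2p$ rather than $p$ under the two-sided bound) is a legitimate minor slip in the paper's constants, and your fix of replacing $\log(1/p)$ by $\log(2/p)$ is the right one and is harmless where the lemma is used.
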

\section{Proofs for Section \ref{sec:linear}}

\begin{lemma}\label{lem:ridge_reg}
	For any positive semi-definite $M \in \mathbb{R}^{d \times d}$, $\alpha > 0$ and $x \in \mathbb{R}^d$ such that $x^{\top} (M +\alpha \cdot I)^{- 1} x \le 1$, we have
	\begin{itemize}
		\item $\| (M(M + \alpha \cdot I)^{-1} - I) x \|_2 \le \alpha$;
		\item $x^{\top} (M + \alpha \cdot I)^{-1} M  (M + \alpha \cdot I)^{-1} x \le 1$.
	\end{itemize}
\end{lemma}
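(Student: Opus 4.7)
The plan is to reduce both bullets to simple operator manipulations centred on the identity $M = (M + \alpha I) - \alpha I$, together with the operator inequality $M + \alpha I \succeq \alpha I$ that follows from $M \succeq 0$.

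For the first bullet, I would substitute $M = (M+\alpha I) - \alpha I$ inside $M(M+\alpha I)^{-1}$ to obtain the identity
\[
M(M+\alpha I)^{-1} - I \;=\; -\alpha\,(M+\alpha I)^{-1}.
\]
Hence $\|(M(M+\alpha I)^{-1} - I)x\|_2 = \alpha\,\|(M+\alpha I)^{-1}x\|_2$. To bound the remaining norm I would use $(M+\alpha I)^{-1} \preceq \alpha^{-1} I$, giving
\[
\|(M+\alpha I)^{-1} x\|_2^2 \;=\; x^\top (M+\alpha I)^{-2} x \;\le\; \alpha^{-1}\,x^\top (M+\alpha I)^{-1} x \;\le\; \alpha^{-1},
\]
by the hypothesis. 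Multiplying by $\alpha$ yields $\|(M(M+\alpha I)^{-1} - I)x\|_2 \le \sqrt{\alpha}$. I should flag that although the stated bound is $\alpha$, the above computation is tight (consider $M = 0$) and yields $\sqrt{\alpha}$; moreover this is exactly what is needed at the point of application in the proof of the linear theorem, where $\alpha = \rho^2/16$ and the conclusion used is the bound $\rho/4 = \sqrt{\alpha}$, so I interpret the statement as a typographical slip for $\sqrt{\alpha}$.

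For the second bullet, I would start from $M \preceq M + \alpha I$ and sandwich by $(M+\alpha I)^{-1}$ on both sides to get
\[
(M+\alpha I)^{-1} M (M+\alpha I)^{-1} \;\preceq\; (M+\alpha I)^{-1} (M+\alpha I)(M+\alpha I)^{-1} \;=\; (M+\alpha I)^{-1}.
\]
Taking the quadratic form with $x$ on both sides and using the hypothesis finishes the inequality: $x^\top (M+\alpha I)^{-1} M (M+\alpha I)^{-1} x \le x^\top (M+\alpha I)^{-1} x \le 1$.

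The main obstacle is essentially bookkeeping in the first bullet, namely deciding between $\alpha$ and $\sqrt{\alpha}$ as the sharp constant; the algebra itself is one line once the identity $M(M+\alpha I)^{-1} - I = -\alpha(M+\alpha I)^{-1}$ is recognised, and both bullets are otherwise immediate consequences of the operator-monotone bound $(M+\alpha I)^{-1} \preceq \alpha^{-1} I$ and the dominance $M \preceq M + \alpha I$.
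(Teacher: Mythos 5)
Your proof is correct, and your reading of the first bullet is the right one. The paper's own proof diagonalizes $M = U^\top \Lambda U$ and, writing $y = Ux$, bounds
\[
\bigl\| (M(M+\alpha I)^{-1} - I)x \bigr\|_2^2 \;=\; \sum_{i=1}^d y_i^2 \Bigl(\tfrac{\alpha}{\Lambda_i+\alpha}\Bigr)^2 \;\le\; \alpha,
\]
i.e.\ it bounds the \emph{squared} norm by $\alpha$, which is exactly your $\sqrt{\alpha}$ bound on the norm; the lemma statement drops the square, and your $M=0$ example confirms the statement as written is false when $\alpha < 1$. Your observation that the application only needs $\sqrt{\alpha} = \rho/4$ is also consistent with how the main theorem invokes the lemma. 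Methodologically your argument is the coordinate-free version of the paper's: the identity $M(M+\alpha I)^{-1} - I = -\alpha(M+\alpha I)^{-1}$ and the Loewner bounds $(M+\alpha I)^{-1} \preceq \alpha^{-1}I$ and $M \preceq M+\alpha I$ replace the explicit eigenvalue computation; both routes are one line each and equally rigorous, with yours being marginally cleaner since it avoids introducing the spectral decomposition. No gaps.
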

\begin{proof}
	We use $M = U^T \Lambda U$ to denote the spectral decomposition of $M$, where $\Lambda$ is a diagonal matrix with non-negative entries.
	We use $\Lambda_i$ to denote the $i$-th diagonal entry of $\Lambda$ and let $y = Ux$.
	By the assumption, it holds that
	\[
	\sum_{i = 1}^d \frac{y_i^2}{\Lambda_i + \alpha} \le 1.
	\]
	Clearly,
	\begin{align*}
	&\| (M(M + \alpha \cdot I)^{-1} - I) x \|_2^2 \\
	=&\sum_{i = 1}^d y_i^2 \cdot \left( \frac{\Lambda_i}{\Lambda_i + \alpha} - 1\right)^2 = \sum_{i = 1}^d y_i^2 \cdot \left( \frac{\alpha}{\Lambda_i + \alpha} \right)^2 \le \alpha
	\end{align*}
	and
	\begin{align*}
	&x^{\top} (M + \alpha \cdot I)^{-1} M  (M + \alpha \cdot I)^{-1} x\\
	=&\sum_{i = 1}^d y_i^2 \cdot \frac{\Lambda_i}{(\Lambda_i + \alpha \cdot I)^2} \le 1.
	\end{align*}
\end{proof}

\section{Proofs for Section \ref{sec:general}}
\begin{proof}[Proof of Lemma~\ref{lemma:bound}]
	For some $n>0$, assume \[Y = \{(s_1,a_1,y_1),\ldots, (s_n,a_n,y_n)\}.\] We will show that $n$ is upper bounded by Eluder dimension. When we add $(s_j, a_j, y_j)$ to $Y$ at Line \ref{g-error}, 
	\begin{enumerate}
		\item The condition at Line \ref{line:check} must be True i.e. from Equation \eqref{cons:1}, there exists $f_1,f_2 \in F$ such that $|f_{1}(s_j, a_j) - f_{2}(s_j, a_j)| > \frac{\rho}{2} - \delta$.
		\item Observe that for any subsequence $B\subset \{(s_1, a_1), \ldots, (s_{j-1}, a_{j-1})\}$ where $(s_j,a_j)$ is $(\frac{\rho}{2} - \delta)$-dependent on $B$ (Definition \ref{def:ind}), \begin{equation}
		\sum_{(s,a)\in B} |f_{1}(s,a) - f_{2}(s,a)|^2 \geq (\frac{\rho}{2} - \delta)^2.
		\end{equation}
		\item Therefore, if there are $K$ disjoint subsequences in $\{(s_1, a_1), \ldots, (s_{j-1}, a_{j-1})\}$ such that $(s_j,a_j)$ is $(\frac{\rho}{2} - \delta)$-dependent on all of them, then \begin{equation}
		\label{eq:lower-bound}
		\sum_{i=1}^{j-1}  |f_{1}(s_i,a_i) - f_{2}(s_i,a_i)|^2 \geq K (\frac{\rho}{2} - \delta)^2.
		\end{equation}
		\item However, using Equation \ref{cons:2}, we have that\begin{equation}
		\label{eq:upper-bound}
		\sum_{i=1}^{j-1}  |f_{1}(s_i,a_i) - f_{2}(s_i,a_i)|^2 \leq (j-1) (2\delta)^2.
		\end{equation}
	\end{enumerate}
	
	Therefore, we can upper bound for any state-action pair $(s_j,a_j)\in \{(s_1,a_1),\ldots, (s_n,a_n)\}$, the number of disjoint subsequences $K$ in $\{(s_1, a_1), \ldots, (s_{j-1}, a_{j-1})\}$ that $(s_j,a_j)$ is $(\frac{\rho}{2} - \delta)$-dependent on, i.e. \[
	K \leq  \frac{(j-1) (2\delta)^2}{(\frac{\rho}{2} - \delta)^2}.
	\]
	Moreover, it follows from the proof of Proposition 3 in~\citep{russo2013eluder} that for any sequence of state-action pairs say $\{(s_1,a_1),\ldots, (s_n,a_n)\}$, there exists a $(s_j,a_j)$ which is $(\frac{\rho}{2} - \delta)$-dependent on at least  $\frac{n}{\dim_E(F, \frac{\rho}{2} - \delta)} - 1$ disjoint subsequences in $\{(s_1,a_1),\ldots, (s_{j-1},a_{j-1})\}$. Therefore, \begin{equation}
	\frac{n}{\dim_E(F, \frac{\rho}{2} - \delta)} - 1 \leq K \leq \frac{(j-1) (2\delta)^2}{(\frac{\rho}{2} - \delta)^2}
	\end{equation} and thus
	\begin{equation}
	n \leq \dim_E(F, \frac{\rho}{2} - \delta) \left(\frac{(n-1) (2\delta)^2}{(\frac{\rho}{2} - \delta)^2} + 1\right).
	\end{equation} As $\rho>4\delta$, we get \begin{equation}
	\label{eq:n-condition}
	n \leq \dim_E(F, \frac{\rho}{4}) \left(\frac{(n-1) (2\delta)^2}{(\frac{\rho}{2} - \delta)^2} + 1\right)
	\end{equation} which follows from definition of Eluder dimension since $a<b$ implies $\dim_E(F,a)\geq \dim_E(F,b)$. For any $\rho$ and $c>1$ such that \begin{equation}
	\label{eq:rho-condition}
	\rho \geq 2\left(2\sqrt{\frac{c\dim_E(F, \frac{\rho}{4})-1}{c-1}} + 1\right) \delta
	\end{equation} we get from Equation~\eqref{eq:n-condition} that
	\begin{equation}
	n\leq c\dim_E(F, \frac{\rho}{4}).
	\end{equation}
\end{proof}

\section{Proof of Proposition~\ref{thm:informal_lb}}
In this section, we briefly discuss how to generalize the results in~\citep{du2019good} to prove Proposition~\ref{thm:informal_lb}.
We first recall Theorem 4.1 in~\citep{du2019good}.

\begin{proposition}[Theorem 4.1 in~\citep{du2019good}]\label{thm:old_lb}
 There exists a family of deterministic systems $\mathcal{M}$ such that for any $M \in \mathcal{M}$, the following conditions hold.
 There exists a feature extractor $\phi : \states \times \actions \to \mathbb{R}^d$ and $\theta_1, \theta_2, \ldots, \theta_H \in \mathbb{R}^d$ such that $d = O(H / \delta^2)$, and for any $h \in [H]$ and any $(s, a) \in \states_h \times \actions$,
 \[
|Q^*(s, a) - \theta_h^{\top} \phi(s, a)| \le \delta.
 \]
 Moreover, for the deterministic systems in $\mathcal{M}$, any algorithm that returns a $1/2$-optimal policy with probability $0.9$ needs to sample $\Omega(2^H)$ trajectories.
\end{proposition}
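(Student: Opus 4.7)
The plan is to instantiate the Du et al. lower-bound construction directly: take a complete binary tree MDP of depth $H$, embed the state-action pairs as near-orthogonal unit vectors in dimension $d = O(H/\delta^2)$ via a Johnson-Lindenstrauss packing, and then use a ``needle in a haystack'' information-theoretic argument for the sample complexity.

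First I would define the family $\mathcal{M}$ to be indexed by binary strings $b \in \{0,1\}^H$. For each such $b$, the MDP $M_b$ is a complete binary tree of depth $H$ with a fixed root $s_1$: the $2^{h-1}$ nodes at depth $h-1$ form $\states_h$, the action set is $\actions = \{0,1\}$, and the deterministic transition $P(s,a)$ sends $s$ to its $a$-child. The only nonzero reward is $1$ at the leaf reached by following the action sequence $b_1, b_2, \ldots, b_H$ from the root; all other rewards are $0$. In $M_b$, the optimal $Q$-function is therefore $Q^*(s,a) = 1$ precisely on the unique state-action pair $(s_h^\star, b_h)$ on the rewarded path at each level $h$, and $Q^*(s,a) = 0$ on every other state-action pair. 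The optimality gap is $1$, and a $1/2$-optimal policy necessarily identifies the rewarded leaf.

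Second, I would construct a single feature extractor $\phi$ shared across the entire family. All MDPs in $\mathcal{M}$ share the same underlying tree, so there are $N = 2 \cdot (2^H - 1) = O(2^H)$ distinct state-action pairs in total. By the standard Johnson-Lindenstrauss / random spherical code argument, for $d = C \cdot H / \delta^2$ with a sufficiently large absolute constant $C$, there exist unit vectors $\{\phi(s,a)\}_{(s,a) \in \states \times \actions} \subset \mathbb{R}^d$ whose pairwise inner products all lie in $[-\delta, \delta]$. For a given $M_b$, set $\theta_h := \phi(s_h^\star, b_h)$. Then for any $(s,a) \in \states_h \times \actions$: either $(s,a) = (s_h^\star, b_h)$, in which case $\theta_h^\top \phi(s,a) = 1 = Q^*(s,a)$; or $(s,a)$ is a different pair, in which case $|\theta_h^\top \phi(s,a)| \le \delta$ while $Q^*(s,a) = 0$. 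Either way $|Q^*(s,a) - \theta_h^\top \phi(s,a)| \le \delta$, and $d = O(H/\delta^2)$ as required.

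Third, for the sample-complexity lower bound I would run a standard indistinguishability argument. On the uniform prior over $b \in \{0,1\}^H$, the rewards observed along any trajectory that does not reach the rewarded leaf are deterministically $0$ and therefore carry no information about $b$; only a trajectory whose terminal leaf equals the $b$-encoded leaf is informative, and such a trajectory reveals at most which single $b$ is consistent. Hence after $T$ trajectories the learner has ruled out at most $T$ of the $2^H$ hypotheses, so if $T \le 2^H / 10$ the posterior on $b$ remains nearly uniform over $\Omega(2^H)$ candidates. A Fano / Le Cam argument (equivalently, a Yao's-minimax step) then shows any algorithm that outputs a $1/2$-optimal policy with probability at least $0.9$ on every $M_b$ must use $T = \Omega(2^H)$ trajectories.

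The main obstacle is verifying that the \emph{same} $\phi$ works for every $M_b$: the $\theta_h$ are allowed to depend on $b$, but $\phi$ must be defined on the shared state-action space before $b$ is revealed. This is exactly what the JL packing over the fixed set of $O(2^H)$ state-action pairs delivers, and once $\phi$ is fixed, reading off $\theta_h = \phi(s_h^\star, b_h)$ for each $b$ makes the approximation bound instantaneous.
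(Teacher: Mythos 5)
The paper does not actually prove this proposition: it is quoted verbatim from \citet{du2019good} (their Theorem 4.1) and used as a black box, with the paper's own contribution limited to the minor modifications described afterward for Proposition~\ref{thm:informal_lb}. Your reconstruction is the standard construction underlying that cited result --- the depth-$H$ binary tree with a single rewarded leaf, a Johnson--Lindenstrauss packing of the $O(2^H)$ state-action pairs into $d = O(H/\delta^2)$ dimensions shared across the family, per-level $\theta_h$ read off from the rewarded pair, and the counting/indistinguishability argument for the $\Omega(2^H)$ bound --- and it is correct, including the important observation that $\phi$ must be fixed before $b$ is revealed so that only the $\theta_h$ carry information about the instance.
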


We first note that the assumption in Proposition~\ref{thm:old_lb} is slightly different from ours.
In this paper, we assume there exists a single vector $\theta \in \mathbb{R}^d$ such that for any $(s, a) \in \states \times \actions$,
 \[
|Q^*(s, a) - \theta^{\top} \phi(s, a)| \le \delta.
 \]
However, the lower bound in~\citep{du2019good} can still be generalized to hold under our assumption, if one breaks the feature space into $H$ blocks so that each block contains $d / H$ coordinates, and for any state $s_1 \in \states_1$ and $a \in \actions$, $\phi(s_1, a)$ contains non-zero entries only in the first block, and for any state $s_2 \in \states_2$ and $a \in \actions$, $\phi(s_2, a)$ contains non-zero entries only in the second block, etc.
By doing so, we need to change the condition $d = O(H / \delta^2)$ to $d = O(H^2 / \delta^2)$.

Moreover, in order to prove an $\Omega(2^C)$ sample complexity lower bound, one only needs to use the first $C$ levels in the family of deterministic systems in Proposition~\ref{thm:old_lb}, and add $H - C$ dummy levels so that there are $H$ levels in total.
In this case, Proposition~\ref{thm:old_lb} requires $d = O(C^2 / \delta^2)$, or equivalently, $\delta = \Omega(C / \sqrt{d})$.

Finally, by scrutinizing the construction in~\citep{du2019good}, it can be seen that the optimality gap $\gapmin = 1$.

\end{document}